\documentclass{article}
\usepackage{lineno}

\PassOptionsToPackage{numbers, compress}{natbib}

\usepackage[preprint]{neurips_2026}

\usepackage{silence}
\usepackage[utf8]{inputenc}
\usepackage[T1]{fontenc}
\usepackage{hyperref}
\usepackage{url}
\usepackage{booktabs}
\usepackage{multirow}
\usepackage{amsfonts}
\usepackage{nicefrac}
\usepackage[final]{microtype}
\usepackage{xcolor}
\usepackage{xspace}
\usepackage{amsmath}
\usepackage{amssymb}
\usepackage{amsthm}
\usepackage{algorithm}
\usepackage{algpseudocode}
\usepackage{caption}
\usepackage{adjustbox}
\usepackage{titlesec}
\usepackage{enumitem}
\usepackage[most]{tcolorbox}

\tcbuselibrary{skins}

\definecolor{darkgreen}{RGB}{0,130,0}
\DeclareFontShape{T1}{ptm}{m}{scit}{<->ssub * ptm/m/sc}{}

\newcommand{\sys}{\textup{\textsc{Soma-SQL}}\xspace}
\newcommand{\dr}[1]{\textcolor{purple}{DR: #1}}
\newcommand{\ssb}[1]{\textcolor{magenta}{SSB: #1}}
\newcommand{\ssa}[1]{\textcolor{green}{SSA: #1}}

\newtheorem{proposition}{Proposition}

\newtheorem*{problem*}{Problem definition}

\newtcolorbox{promptbox}[2][]{%
  enhanced,
  breakable,
  colback=gray!10,
  colframe=gray!65,
  colbacktitle=gray!70,
  coltitle=white,
  fonttitle=\bfseries,
  title={#2},
  boxrule=0.8pt,
  arc=4pt,
  left=8pt,
  right=8pt,
  top=8pt,
  bottom=8pt,
  boxed title style={sharp corners},
  halign=flush left,
  #1
}

\titlespacing*{\subsection}{0pt}{0.8ex}{0.2ex}
\title{\sys: Resolving Multi-Source Ambiguity in NL-to-SQL via Synthetic Log and Execution Probing}

\author{%
\begin{tabular}{c}
\textbf{Sai Ashish Somayajula\textsuperscript{*}},
\textbf{Marianne Menglin Liu\textsuperscript{*}},
\textbf{Chuan Lei},
\textbf{Fjona Parllaku}, \\
\textbf{Daniel Garcia},
\textbf{Rongguang Wang},
\textbf{Syed Fahad Allam Shah},
\textbf{Ankan Bansal}, \\
\textbf{Sujeeth Bharadwaj},
\textbf{Tao Sheng},
\textbf{Sujith Ravi},
\textbf{Dan Roth} \\
{\normalfont Oracle AI} \\
{\normalfont\texttt{\{ashish.somayajula, marianne.liu, dan.roth\}@oracle.com}}
\end{tabular}
}

\begin{document}

\maketitle

\begin{center}
{\small $^*$Equal contribution.}
\end{center}

\begin{abstract}
Natural language interfaces to databases aim to translate user questions into executable SQL, yet remain brittle in real-world settings where questions are underspecified and schemas are large and ambiguous.
Ambiguity across user questions, database schemas, and model interpretations are central failure modes in NL2SQL, leading to misaligned intent, incorrect schema grounding, and erroneous SQL generation. 
Existing approaches rely on human clarification or treat ambiguity as a schema representation problem, but these do not scale nor resolve ambiguity autonomously.
We propose \sys to automatically resolve ambiguity via targeted synthetic query log and ambiguity-driven probing.
\sys constructs synthetic query log to ground schema interpretation and guide candidate SQL generation; it then executes targeted probing queries, driven by a structured ambiguity taxonomy and candidate disagreements, to produce disambiguation evidence for final SQL selection and repair. 
This active approach to ambiguity discovery and resolution generalizes across unseen schemas and query distributions without human-in-the-loop.
Experiments on six public benchmarks demonstrate that \sys improves execution accuracy by 13.0\% on average over state-of-the-art baselines, with gains of up to 16.7\% on ambiguous questions. 
\end{abstract}

\section{Introduction}
\label{sec:intro}

Translating natural language (NL) questions into SQL queries (NL2SQL) is a long-standing problem at the intersection of semantic parsing, pragmatics, and database systems. 
Despite recent advances in large language models (LLMs), NL2SQL systems~\cite{talaei2024chess,pourreza2024,deng2025reforce,wang2025autolink} remain brittle under the ambiguity and scale of enterprise deployments. 
Real-world databases are large, complex, and often poorly documented, with schemas spanning hundreds of tables and ambiguous naming conventions~\cite{lei2024spider,floratou2024nl2sql,luo2025nl2sql}. 
User questions are frequently underspecified~\cite{saparina2024ambrosia,vaidya2025odin,ding2026}, omitting aggregation metrics, filters, or join conditions, while implicitly assuming the system shares their domain knowledge and business logic.
As a result, while modern NL2SQL agents~\cite{wang2025autolink,agenticdata} perform well in idealized settings with clean schemas and well-specified queries, they continue to struggle against realistic enterprise databases with large, complex schemas and abstract, business-driven questions.

{\bf Motivating example.} 
Consider a sales analyst querying an enterprise database: ``\textit{What were the top-performing regions in New York last quarter?}'' The phrase \textit{top-performing} could refer to total revenue, growth rate, or number of closed deals, each requiring a fundamentally different aggregation.
\textit{Last quarter} may refer to the calendar quarter, fiscal quarter, or a rolling 90-day window.
Even \textit{regions} is underspecified when the database contains multiple possible geographic or business groupings, such as sales territories, geographic regions, or market segments.
\textit{New York} may refer to different entities (e.g., \textit{New York City} vs.\ \textit{New York State}) and may also appear in different surface forms (e.g., \textit{NY} or \textit{NYC}). A system must silently commit to certain assumptions for each ambiguity, producing SQL queries that are syntactically valid yet return results that do not match the analyst's intent, a failure that may not surface until downstream decisions have already been made. 

\begin{figure}[t]
    \vspace{-10pt}
    \centering
    \includegraphics[width=0.9\linewidth]{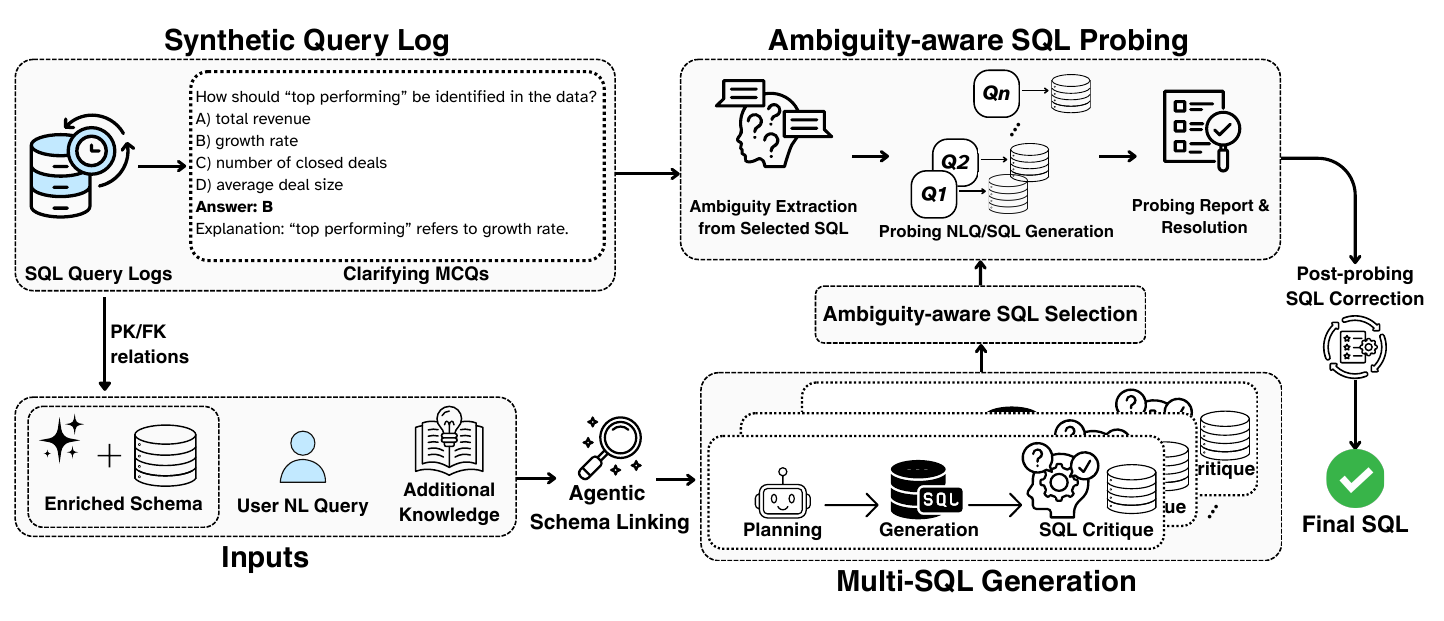}
    \vspace{-10pt}
    \caption{An overview of \sys consisting of (a) Synthetic Query Log Construction for Disambiguation, (b) Ambiguity-aware SQL Probing and Correction, and (c) Multi-SQL Generation.}
    \label{fig:archi}
\end{figure}

{\bf Challenges.} 
These limitations manifest as three ambiguity-driven failure modes. 
First, {\it query-level ambiguity} arises when user intent is underspecified. The desired aggregation, filter condition, or temporal granularity is left implicit, forcing the system to guess among equally valid interpretations.
Second, {\it schema-level ambiguity} occurs when overlapping or poorly named schema elements obscure which table, column, or value a query phrase refers to.
For instance, ``{\it revenue}'' may map to multiple columns with different semantics in the database. 
Third, {\it contextual ambiguity} emerges when questions presuppose domain knowledge or business logic absent from both the question and the schema, such as metric definitions, organizational hierarchies, or implicit filtering conventions, leaving the model without sufficient grounding to reason correctly.

{\bf State-of-the-art approaches.} 
While some recent works explicitly target ambiguity in NL2SQL, each addresses only a narrow facet of the problem. 
One line of work resolves ambiguity through human-in-the-loop clarification~\cite{saparina2024ambrosia,vaidya2025odin,ding2026}, prompting users to select among candidate interpretations. 
Though effective in controlled settings, these approaches require synchronous user interaction at inference time and fail silently when ambiguities escape detection. 
A second line of work~\cite{wang2023know,desaiatomsql} treats ambiguity as a byproduct of poor schema representation, improving grounding through richer schema encoding or retrieval-augmented context, but cannot distinguish a genuinely underspecified query from one that is merely hard to ground.
A fundamental gap remains: no existing system reasons explicitly about why a query is ambiguous, which type of ambiguity is present, or how to resolve it autonomously, nor do they exploit the semantic divergence across multiple plausible SQL interpretations as a diagnostic signal for unresolved intent.

{\bf \sys overview.} We propose \sys (\underline{S}ynthetic Query Log and Pr\underline{o}bing for \underline{M}ulti-source \underline{A}mbiguity Resolution), a generalizable NL2SQL system that resolves ambiguity without human-in-the-loop (shown in Fig.~\ref{fig:archi}). 
Given an NL question, \sys first exploits query log generated synthetically to enrich schema understanding with ambiguity-aware supervision.
After agentic schema linking~\cite{wang2025autolink}, a multi-SQL generation workflow produces diagnostic SQL candidates; an ambiguity-aware SQL selector identifies a seed SQL for refinement, while cross-candidate disagreements expose the unresolved semantic ambiguities.
\sys then maps these conflicts, guided by a predefined ambiguity taxonomy, into structured ambiguity dimensions, and probes the database with targeted SQL queries to gather evidence for each.
The resulting probing report resolves the ambiguity dimensions and repairs the seed SQL to produce the final SQL with data-grounded evidence.


\textbf{Contributions.}
Our contributions can be summarized as follows: 
(1) we present \sys, a generalizable NL2SQL system that tackles multi-source ambiguity through iterative disambiguation and execution-grounded reasoning, producing semantically correct SQL without human-in-the-loop;
(2) \sys leverages synthetic query log constructed from SQL queries without corresponding NL questions, as is typical in enterprise settings, as a unified mechanism for ambiguity grounding, transforming SQL variations into structured signals for schema enrichment, semantic interpretation, and probing guidance;
(3) we develop an execution-grounded probing framework in which candidate SQL disagreements and a structured ambiguity taxonomy jointly guide targeted probe generation, producing concrete resolution evidence for SQL refinement; and
(4) 
experimental results on six public benchmarks show that \sys improves execution accuracy by 13.0\% on average over state-of-the-art baselines, with gains of up to 16.7\% on ambiguous questions.
\vspace{-4pt}
\section{Related Works}
\label{sec:related}

{\bf NL2SQL systems.}
LLMs have dramatically advanced NL2SQL, with methods such as DIN-SQL~\cite{pourreza2023dinsql}, DAIL-SQL~\cite{gao2023dail}, and MAC-SQL~\cite{wang2025mac} demonstrating that decomposed prompting and multi-agent collaboration yield strong results on benchmarks like Spider~\cite{yu2018spider} and BIRD~\cite{li2023bird}. More recent agentic frameworks further improve schema linking and error correction. ReFoRCE~\cite{deng2025reforce} combines schema compression with execution-guided exploration; AutoLink~\cite{wang2025autolink} performs iterative schema linking without requiring the full schema upfront; PV-SQL~\cite{tian2026pvsql} alternates between probing queries and rule-based verification for iterative refinement; and SQLens~\cite{gong2026sqlens} uses AST-level comparisons to localize semantic errors beyond LLM self-reflection. DeepEye-SQL~\cite{li2025deepeye} decomposes NL2SQL into schema linking, reasoning, implementation, debugging, and final selection. 
EnrichIndex~\cite{chen2025enrichindex} leverages LLMs offline to construct semantically enriched indices, highlighting the effectiveness of structured signals for downstream tasks. In contrast, BIRD-INTERACT~\cite{huo2025birdinteract} exposes the limitations of current systems in multi-turn, interactive settings. These works, however, largely assume well-formed input and offer no principled mechanism for handling query ambiguity.

{\bf Ambiguity in NL2SQL.}
AMBROSIA~\cite{saparina2024ambrosia} benchmarks three types of ambiguity in NL questions (i.e., scope, attachment, and vagueness) showing that even frontier LLMs fail to resolve them reliably. Wang et al.~\cite{wang2023know} detect and explain ambiguous or unanswerable queries via a Detecting-Then-Explaining framework trained on counterfactually generated data. ODIN~\cite{vaidya2025odin}, AtomSQL~\cite{desaiatomsql}, and AmbiSQL~\cite{ding2026} address schema and query ambiguity through candidate generation, user feedback, or clarification questions.
Our approach differs by learning ambiguity patterns from execution feedback and by automatically resolving ambiguities without human intervention. We discuss the broader literature on ambiguity resolution in NL beyond NL2SQL in Appendix~\ref{app:more_related}.

\section{Methodology}
\label{sec:method}


\subsection{Problem Formulation}
\label{sec:method:problem}

Let $S$ denote a database schema, $D$ a database instance, and $x$ a natural language question. An NL2SQL system generates an executable SQL query\footnote{SQL is a family of dialects, including SQLite, BigQuery SQL, Snowflake SQL, and Oracle SQL; and our solution is applicable to all of them. We refer to an NL question as a ``question'' and its corresponding SQL as a ``query''.} $q \in \mathcal{Q}(S)$, where $\mathcal{Q}(S)$ denotes the set of valid SQL queries over $S$. 
Since different SQL queries can produce the same result when executed, we treat the target SQL as one representative among all queries with equivalent execution semantics. 
In practice, enterprise queries are often underspecified: users may omit the intended metric, filter, time window, aggregation level, join path, or business convention.
For example, ``{\it top-performing regions in New York last quarter}'' may depend on the ranking metric (e.g., revenue, growth, or deal count), the definition of last quarter (e.g., calendar or fiscal), and the interpretation of schema-specific entities such as regions and locations.
We model the missing semantics as an underspecified intent variable $Z$ with finite support $\mathcal{Z}(x,S)$, the set of plausible user interpretations of $x$ under schema $S$. 
A realization $z \in \mathcal{Z}(x,S)$ denotes one concrete intent assignment.
Conditioned on $z$, an NL2SQL system induces a distribution over executable SQL queries:
\(
P_\theta(q\mid x,S,z), q\in\mathcal{Q}(S),
\)
where $\theta$ denotes the pipeline configuration of the NL2SQL system~\cite{pourreza2024,talaei2024chess}.
The core challenge is that the user's true intent $z^\star$ is unobserved, making it impossible to directly condition on $z^\star$ during generation.

\begin{problem*}
\label{prob:somasql}
Given a natural language question $x$, a database schema $S$, and a database instance $D$, the task is to produce an executable SQL query $\hat{q} \in \mathcal{Q}(S)$ whose execution semantics match those of the gold SQL query $q^\star$.
Since the user's true intent $z^\star \in \mathcal{Z}(x,S)$ is unobserved, this requires
(i) identifying the finite set of plausible intent assignments $\mathcal{Z}(x,S)$,
(ii) estimating $\hat{z} \in \mathcal{Z}(x,S)$ from execution-grounded evidence, and
(iii) generating the final SQL as $\hat{q} \in \arg\max_{q \in \mathcal{Q}(S)}, P_\theta(q \mid x, S, \hat{z})$.
\end{problem*}

\textbf{Ambiguity dimensions.}
We decompose the {underspecified intent variable }into \emph{ambiguity dimensions}
\(
Z=(Z_1,\ldots,Z_m),
\)
where each \(Z_j\) captures one unresolved semantic decision, such as the ranking metric, time convention, filter, aggregation, or join path. 
A realized intent assignment
\(
z=(z_1,\ldots,z_m),
\)
selects a concrete value $z_j$ for each dimension \(Z_j\). 
The dimensions need not be independent; valid joint assignments may form only a subset of the Cartesian product of the dimension option sets. 
For example, in ``{\it top-performing regions in New York last quarter},'' \(Z_1\) (ranking metric), \(Z_2\) (time convention), and \(Z_3\) (entity reference) each admit multiple options, resolving them as \(z_1=\text{\textit{revenue}}\), \(z_2=\text{\textit{fiscal quarter}}\), and \(z_3=\text{\textit{New York State}}\) yields a fully specified intent from which the target SQL is deterministically derived.
%
\subsection{Multi-SQL Generation Workflow}
\label{sec:method:workflow}

\sys is an ambiguity-first NL2SQL framework built around a widely adopted plan-generate-critique workflow, as illustrated in Fig.~\ref{fig:archi}. 
Before invoking the workflow, \sys enriches the database schema~\cite{XiYanSQL,chen2025enrichindex} and performs agentic schema linking~\cite{wang2025autolink} to identify the relevant subset of $S$ for each question $x$\footnote{For ease of notation, we refer to this subset as $S$ throughout.}. 
This step avoids prompt context overflow (see Appendix~\ref{appendix:additional-details} for details). 
Prior to intent resolution, the workflow induces a distribution
\(
P_\theta(q\mid x,S),
\)
reflecting uncertainty over the underspecified intent. 
Running the workflow $K$ times produces candidates
\(
q_k \sim P_\theta(q \mid x,S) \text{ for } k=1,\ldots,K,
\)
defining the diagnostic candidate set
\[
\mathcal{Q}_K(x,S)=\{q_1,\ldots,q_K\}\subset\mathcal{Q}(S).
\]
These candidates serve as diagnostic samples rather than final answers: disagreements in their choice of metric, filter, join path, or temporal constraint expose the unresolved ambiguity dimensions defined in Sec.~\ref{sec:method:problem}.
In each run, the planner produces a structured plan
\(
\pi_k = \mathrm{Plan}(x,S)
\)
specifying relevant tables, joins, filters, aggregations, grouping keys, ordering logic, and temporal constraints. 
The dialect-aware generator then translates the plan into an initial SQL query: 
\(
q_k=\mathrm{Gen}(x,S,\pi_k).
\)

The critique module refines $q_k$ using schema validation, abstract syntax tree (AST) parsing~\cite{sqlglot}, SQL planning diagnostics, and lightweight execution diagnostics:
\(
q_k=\mathrm{Critique}(x,S,D,\pi_k,q_k).
\) 
Following~\citet{gong2026sqlens} for parsing-based and execution diagnostics, we retain only the checks that provided the largest gains and generalized across datasets: invalid join predicates unsupported by the schema, suboptimal join trees containing unnecessary tables, incorrect \texttt{GROUP BY} structure, and abnormal execution outputs (e.g., all-zero results or \texttt{NaN}s). 
We encode these high-impact diagnostics directly into the critique prompt.
Repeating this loop for $k=1,\ldots,K$ yields $\mathcal{Q}_K(x,S)$. 
We optimize the planner's system prompt to mitigate common failure modes; prompts for each stage are in Appendices~\ref{app:plan-prompt}, \ref{app:sql-prompt}, \ref{app:compile-prompt}, and~\ref{app:critique-prompt}.

\sys resolves the unresolved ambiguity dimensions in $\mathcal{Q}_K(x,S)$ using two complementary sources of evidence, namely synthetic query log (Sec.~\ref{sec:method:log}) and execution-grounded probing (Sec.~\ref{sec:method:probing}), to produce an evidence-supported intent estimate $\hat{z}$, conditioned on which the final SQL is generated.

\subsection{Ambiguity Supervision from Synthetic Query Log}
\label{sec:method:log}

\paragraph{Ambiguity resolution set construction.} 
A SQL query log contains validated SQL queries executed over a schema, without associated NL questions. Such logs are common in enterprise databases; when available, \sys leverages them to construct reusable supervision for recurring ambiguities and failure patterns. Algorithm~\ref{alg:synthetic-log} summarizes the synthetic ambiguity supervision pipeline.

\textbf{Intent anchors.}
Each logged SQL query $q_i^{\log}$ serves as an \emph{intent anchor}, representing a historically resolved user need. 
For each anchor, \sys prompts an LLM to generate a fully specified NL description, and then derives several underspecified variants $x_i^{(j)}$ by removing details along common ambiguity dimensions~\cite{ding2026}. 
The detailed prompts are in Appendices~\ref{sec:nl-question-generation} and \ref{sec:ambiguous-nl-question-generation}.

\textbf{Candidate disagreement.}
For each $x_i^{(j)}$, \sys runs the multi-SQL generation workflow to produce $K$ candidate SQL queries, $\mathcal{Q}_K(x_i^{(j)}, S) \subset \mathcal{Q}(S)$. 
Because $x_i^{(j)}$ is intentionally underspecified, these candidates often reflect distinct plausible interpretations. 
\sys presents all $K$ candidates to an LLM and prompts it to identify implementation-level differences across the full SQL structure. 
The resulting differences are consolidated into a set of discrepancies $\Delta_i^{(j)}$, with each unique discrepancy recorded once. 
For example, two candidate filters, \textit{city = `NYC'} and \textit{city = `New York'}, are consolidated into a predicate-value discrepancy set as a single entry in $\Delta_i^{(j)}$. 

\textbf{Ambiguity supervision.}
\sys converts $\Delta_i^{(j)}$ into structured ambiguity records, 
each a triple $(Z_j, \mathcal{O}, \mathcal{E})$, where $Z_j$ is an ambiguity dimension, $\mathcal{O}$ is the set of plausible semantic options, and $\mathcal{E} \subseteq \Delta_i^{(j)}$ is the set of implementation-level SQL differences that induce the distinction. 
For example, if candidates disagree between \textit{city = `New York City'} and \textit{state = `New York'}, this induces an entity-reference ambiguity dimension with options \{{\it New York City, New York State}\} and evidence drawn from the observed predicate differences.

More generally, each option represents a distinct resolution of $Z$, such as a choice of aggregation, filter condition, grouping level, or join path. The option set $\mathcal{O}$ is induced from candidate disagreement, capturing both correct and plausible-but-incorrect interpretations. The anchor $q_i^{\log}$ identifies the historically supported option $o \in \mathcal{O}$; if absent, \sys adds a grounded option derived from $q_i^{\log}$. This converts unlabeled candidate disagreement into labeled ambiguity supervision. Each supervision item is stored as
\(
c = (x_i^{(j)}, S, Z, \mathcal{O}, o, \mathcal{E}),
\)
which isolates a semantic decision, enumerates competing resolutions, and links them to concrete SQL differences.

\textbf{Ambiguity resolution set.}
Aggregating all MCQ-style records across anchors and their variants yields the {\it ambiguity resolution set} $\mathcal{C}$. 
It provides dimension-level supervision, namely identifying which semantic decisions are typically underspecified, which alternatives are plausible, which resolution is historically supported, and what SQL evidence distinguishes the alternatives. 


{\bf Example.}
Starting from an intent anchor $q_i^{\log}$, \sys generates a detailed NL question, ``{\it List the top-5 regions by total revenue in the most recent fiscal quarter},'' and removes key details to obtain the underspecified question $x_i^{(j)}$: ``{\it top-5 performing regions last quarter}.'' 
The multi-SQL generation workflow then produces $K$ candidates that resolve the missing details differently: candidates may rank regions by \texttt{SUM(revenue)}, \texttt{SUM(profit)}, or \texttt{COUNT(deals)}, and may interpret ``last quarter'' as calendar rather than fiscal. 
From the resulting discrepancies $\Delta_i^{(j)}$, \sys extracts two ambiguity dimensions: $Z_1 =$ ``ranking metric,'' with $\mathcal{O}_1 = \{\text{revenue}, \text{profit}, \text{deal count}\}$, and $Z_2 =$ ``time convention,'' with $\mathcal{O}_2 = \{\text{calendar quarter}, \text{fiscal quarter}\}$. 
The intent anchor $q_i^{\log}$ selects $o_1 = \text{revenue}$ and $o_2 = \text{fiscal quarter}$, yielding two labeled MCQ items added to $\mathcal{C}$.

\paragraph{Applying the ambiguity resolution set at inference time.}
When $\mathcal{C}$ is available, \sys first retrieves $m'$ MCQ records most similar to the user question $x$ and schema $S$, and then uses a reranker (see Appendix~\ref{sec:ambiguity-reranker-prompt}) to select the most relevant $m$ examples:
\[
\mathcal{C}_x=\mathrm{Rerank}_{m}\!\left(x,S,\mathrm{Retrieve}_{m'}(x,S,\mathcal{C})\right), m<m'.
\]
The retrieved MCQs are provided as in-context examples to the planning, SQL generation, and critique prompts. 
They expose likely ambiguity dimensions, indicate historically supported resolutions, and flag wrong implementations that the model has previously found plausible. 
If historical log is unavailable, this step is omitted and $\mathcal{C}_x=\varnothing$.

\subsection{Ambiguity Discovery, Probing, and Final Generation}
\label{sec:method:probing}

At inference time, 
\sys combines ambiguity-set evidence with database probing to resolve ambiguities. 
Given a question $x$, schema $S$, and retrieved records $\mathcal{C}_x$ if available,
the workflow described in Sec.~\ref{sec:method:workflow} samples $K$ diagnostic candidates, $\mathcal{Q}_K(x,S)$. Algorithm~\ref{alg:ambiguity-probing} summarizes the full ambiguity probing and repair procedure.
\sys first selects a seed SQL,
\[
q^{(0)}=\mathrm{Select}(x,S,\mathcal{Q}_K(x,S)),
\]
using an LLM judge (Appendix \ref{app:sql-selection-judge-prompt}) to take $x$, $S$, and all $K$ sampled SQLs as input and selects the one SQL closest to the intent of $x$. 
The seed is not assumed to be correct; it is the single candidate subsequently repaired. 
\sys identifies ambiguity dimensions to probe from two complementary signals.
First, comparing SQL structures within $\mathcal{Q}_K(x,S)$ yields $m_c$ candidate-induced ambiguity dimensions 
\(
\{(Z_t^{\mathrm{cand}},\mathcal{E}_t^{\mathrm{cand}})\}_{t=1}^{m_c},
\)
where $\mathcal{E}_t^{\mathrm{cand}}$ contains the implementation-level differences, such as alternative aggregations, filters, grouping keys, or join paths, that surface the $t$-th ambiguity dimension $Z_t^{\mathrm{cand}}$. 
Because candidate differences alone can miss plausible interpretations or conflate distinct semantic choices into similar implementations, \sys additionally prompts an LLM with $(x,S)$ and a database-sourced ambiguity taxonomy~\citep{ding2026} to obtain $m_t$ taxonomy-induced dimensions
\(
\{Z_j^{\mathrm{tax}}\}_{j=1}^{m_t},
\)
where $Z_j^{\mathrm{tax}}$ is the $j$-th taxonomy-induced ambiguity dimension. 
The union of both sets defines the probing targets. 
\paragraph{Probing unresolved dimensions.}
Identifying an ambiguity dimension reveals what is underspecified, but not its intended resolution. For each dimension, \sys generates targeted probe SQLs and executes them on the database instance $D$. Probe SQLs are executable queries designed to gather evidence for a specific ambiguity dimension rather than answer the user query directly. Candidate-induced probes are guided by $\mathcal{E}_t^{\mathrm{cand}}$ and schema context, while taxonomy-induced probes instantiate plausible alternatives from schema elements and database values.


Consider the query \emph{``show me churn for enterprise customers last quarter.''} A discovered ambiguity dimension concerns the definition of \emph{churn}: it may refer to canceled subscriptions, inactive accounts, lost revenue, or customers with no recent purchases. \sys constructs probe SQLs for each interpretation and executes them against the database. If the canceled-subscription interpretation yields substantially stronger and more temporally consistent evidence than the alternatives, the probe results favor that interpretation. A full prompt-and-report example is provided in Appendix~\ref{app:probe-example}.


\paragraph{Grounding report and resolution.}
The probe outputs are compiled into a grounding report
\[
R=\mathrm{Probe}
\!\left(
x,S,D,
\{(Z_t^{\mathrm{cand}},\mathcal{E}_t^{\mathrm{cand}})\}_{t=1}^{m_c},
\{Z_j^{\mathrm{tax}}\}_{j=1}^{m_t}
\right).
\]
$R$ records, for each ambiguity dimension, the execution results and database-grounded evidence for each candidate alternative. For candidate-induced dimensions, the alternatives are drawn from $\mathcal{E}_t^{\mathrm{cand}}$; for taxonomy-induced dimensions, they are first instantiated from schema elements and database values before probing.
\sys then resolves the discovered dimensions with an LLM-based resolver:
\[
\hat z
=
\mathrm{Resolve}
\!\left(
x,S,\mathcal{C}_x,
\{(Z_t^{\mathrm{cand}},\mathcal{E}_t^{\mathrm{cand}})\}_{t=1}^{m_c},
\{Z_j^{\mathrm{tax}}\}_{j=1}^{m_t},
R
\right),
\]
producing an evidence-supported intent estimate $\hat z \in \mathcal{Z}(x,S)$. It denotes the resolved interpretations of the candidate-induced and taxonomy-induced ambiguity dimensions, based on the accumulated evidence in $R$. Ambiguity resolution prompt is in \ref{app:ambi-resolution-prompt}.
\paragraph{Final generation.}
Given $\hat z$, the seed SQL $q^{(0)}$, and the grounding report $R$, \sys generates the final SQL by sampling from the intent-conditioned distribution 
\(
\hat q
\sim
P_\theta
\bigl(q \mid x,S,\hat z,q^{(0)},R\bigr).
\)
Diagnostic candidates are used to expose ambiguity, probe SQLs provide database-grounded evidence in $R$, and the resolver produces the clarified intent $\hat z$. 
The final SQL is generated by repairing the seed query under this evidence-supported interpretation.

\subsection{Automated Disambiguation as Human Clarification Proxy}
\label{sec:theoretical-guarantee}


Given $x$, $S$, and $\mathcal{C}_x$ if available, \sys generates
diagnostic candidates $\mathcal{Q}_K(x,S)$, detects ambiguity dimensions
from candidate differences and taxonomy-guided signals, and probes the
database to collect resolution evidence. In an ideal human-in-the-loop
setting, detected ambiguities would be presented to the user as an MCQ,
and the response would directly reveal the intended interpretation. We formalize this ideal setting using three random variables: $Z$ denotes the latent user intent, as defined above; $U$ denotes the ideal human clarification signal induced by the MCQ interaction; and $Y$ denotes the automated evidence constructed by \sys:
\[
Y \triangleq
\Bigl(
  \mathcal{Q}_K(x,S),\;
  \mathcal{C}_x,\;
  \{(Z_t^{\mathrm{cand}},\mathcal{E}_t^{\mathrm{cand}})\}_{t=1}^{m_c},\;
  \{Z_j^{\mathrm{tax}}\}_{j=1}^{m_t},\;
  R
\Bigr),
\]
where $Y$ historical log based resolution, aggregates candidate SQLs, retrieved ambiguity records, detected ambiguity dimensions, and probe-execution results.

\begin{proposition}[Automated proxy for clarification]
\label{prop:somasql-ambiguity-reduction}
For $(x,S)$, suppose the automated proxy $Y$ is a noisy approximation of the ideal human clarification signal $U$ in the sense that, after conditioning on $U$ and $(x,S)$, observing $Y$ provides no further information about $Z$.
Then,
\(
I(Z;Y\mid x,S) \leq I(Z;U\mid x,S).
\)
Thus, human clarification is an information-theoretic ceiling, and \sys constructs a tractable automated proxy below this ceiling.
\end{proposition}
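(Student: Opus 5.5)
The hypothesis is a conditional independence statement. Observing $Y$ adds no information about $Z$ once $U$ and $(x,S)$ are known, i.e.\ $I(Z;Y\mid U,x,S)=0$, or equivalently the Markov chain $Z \to U \to Y$ holds conditionally on $(x,S)$. The conclusion is then the conditional data processing inequality. My plan is to derive it directly from the chain rule for mutual information rather than cite it as a black box, so that every step rests only on the stated assumption.

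\textbf{Step 1 (formalize the assumption).} Fix $(x,S)$ and work throughout with the joint distribution of $(Z,U,Y)$ conditioned on $(x,S)$. Since $Z$ has finite support $\mathcal{Z}(x,S)$, all entropies involving $Z$ are finite. I read ``observing $Y$ provides no further information about $Z$'' as the identity $I(Z;Y\mid U,x,S)=0$.

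\textbf{Step 2 (two-way chain rule).} Expand $I(Z;U,Y\mid x,S)$ in two orders:
\[
I(Z;U,Y\mid x,S)=I(Z;U\mid x,S)+I(Z;Y\mid U,x,S)=I(Z;Y\mid x,S)+I(Z;U\mid Y,x,S).
\]

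\textbf{Step 3 (conclude).} The first expansion reduces to $I(Z;U\mid x,S)$ by Step 1. The second is at least $I(Z;Y\mid x,S)$, because conditional mutual information is nonnegative. Together these give $I(Z;Y\mid x,S)\le I(Z;U\mid x,S)$. The interpretive sentence then follows: any proxy satisfying the assumption can reveal at most as much about the latent intent as the ideal MCQ clarification $U$.

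\textbf{Main obstacle.} The difficulty is not the inequality but making the objects well defined. $Y$ is a complex, partly LLM-generated object (candidate SQLs, retrieved records, probe reports), so I must treat $Y$ and $U$ as random variables on a common probability space with $(x,S)$ held fixed. That space carries the randomness of $Z$, of the user's response, and of the pipeline's sampling. If $Y$ or $U$ is continuous or non-discrete, I would define mutual information through the general measure-theoretic definition and invoke the chain rule and nonnegativity in that generality; alternatively, I would note that both hold for arbitrary standard Borel spaces. Because $H(Z\mid x,S)$ is finite, every term in Step 2 is finite, so the subtractions there are legitimate.
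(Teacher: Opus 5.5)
Your proof is correct and follows the paper's: the paper also formalizes the hypothesis as the conditional Markov chain $Z - U - Y \mid (x,S)$, i.e.\ conditional independence of $Z$ and $Y$ given $(U,x,S)$, and concludes by the conditional data-processing inequality. The only difference is that you derive that inequality from the two-way chain-rule expansion of $I(Z;U,Y\mid x,S)$ instead of citing it, and that expansion is its standard proof.
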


The proof is in Appendix~\ref{app:ambiguity-reduction-proof}. 
The proposition characterizes when \sys can approach the human-in-the-loop ceiling.
Real database workloads are repetitive: for a fixed schema $S$, many questions follow recurring intent patterns.
As historical log covers these patterns, retrieved ambiguity records $\mathcal{C}_x$ recover previously resolved ambiguity decisions, narrowing the gap between $I(Z;Y\mid x,S)$ and $I(Z;U\mid x,S)$. 
The remaining gap is concentrated on rare or novel intents where the log lacks precedent.
\section{Experiments}
\label{sec:experiments}

\subsection{Experimental Setup}
\label{sec:exp:setup}

\textbf{Datasets.}  
We evaluate \sys on six NL2SQL benchmarks: Spider 2.0 Lite~\cite{lei2024spider}, BIRD-dev~\cite{li2023bird}, Archer-dev (English and Chinese)~\cite{zheng2024archer}, BEAVER (Oracle dialect)~\cite{chen2024beaver,beaver_oracle_conversion}, and AMBROSIA~\cite{saparina2024ambrosia}. Details can be found in Appendix~\ref{appendix:datasets}.

\textbf{Evaluation metrics.}
Following prior work~\cite{deng2025reforce,li2025deepeye,wang2025autolink}, we report Execution Accuracy (EX) as the primary metric. A prediction is correct if the predicted SQL program executes successfully and its result matches that of the ground-truth SQL program under the benchmark's evaluator. We use each benchmark's official evaluator where available; otherwise, we fall back to the evaluation logic provided by Spider 2.0 Lite.

\textbf{Baselines.}
We compare \sys against AutoLink~\cite{wang2025autolink} and DeepEye-SQL~\cite{li2025deepeye}, the leading open-source methods on the Spider 2.0 Lite and BIRD leaderboards, respectively.



\paragraph{Models.}
We evaluate all methods using three backbone LLMs: GPT-5.3-Codex~\cite{openai_gpt53_codex}, GPT-5~\cite{openai_gpt5}, and Gemma-4-31B~\cite{gemma4}, representing coding-oriented, general-purpose, and open-source models.

\textbf{Implementation settings.}
We use GPT-5.3-Codex to generate synthetic query log and sentence-transformers/all-MiniLM-L6-v2 for retrieval embeddings; we rerank the top $10$ candidates with GPT-5.4~\cite{openai2026gpt54} and keep up to $3$. For ambiguity-driven probing, we use codex-cli (v0.111.0) and implement the probing workflow using agent skills (see Appendix~\ref{app:ambi-resolution-prompt}). For multi-SQL generation, we sample $10$ candidate SQLs and allow up to $5$ critique rounds. 

\begin{table}[t]
\caption{Execution accuracy (EX) in \% across benchmarks without synthetic query log; ``Avg.'' is the mean across Spider 2.0 Lite, BIRD, Archer-en, Archer-zh, and BEAVER. Values highlighted in blue and annotated with gains denote absolute improvements over the second-best result. \sys consistently outperforms baselines across datasets, dialects, languages, and backbone models.}
\label{tab:main_results}
\centering
\small
\setlength{\tabcolsep}{3pt}
\resizebox{\linewidth}{!}{
\begin{tabular}{l l c c c c c c}
\toprule
\textbf{LLMs} & \textbf{Methods} & \textbf{Spider 2.0 Lite} & \textbf{BIRD} & \textbf{Archer-en} & \textbf{Archer-zh} & \textbf{BEAVER} & \textbf{Avg.} \\
\midrule

\multirow{3}{*}{GPT-5.3 Codex}
& \textbf{\sys}
& \textbf{69.3}{\color{blue}(+23.0)}
& \textbf{73.1}{\color{blue}(+5.0)}
& \textbf{71.2}{\color{blue}(+2.9)}
& \textbf{78.9}{\color{blue}(+5.8)}
& \textbf{30.1}{\color{blue}(+7.6)}
& \textbf{64.5}{\color{blue}(+9.3)} \\

& AutoLink
& 46.3 & 65.6 & 68.3 & 73.1 & 22.5 & 55.2 \\

& DeepEye-SQL
& 33.6 & 68.1 & 41.4 & 50.0 & 12.0 & 41.0 \\

\midrule

\multirow{3}{*}{GPT-5}
& \textbf{\sys}
& \textbf{65.8}{\color{blue}(+20.6)}
& \textbf{71.0}{\color{blue}(+4.7)}
& \textbf{72.1}{\color{blue}(+7.7)}
& \textbf{84.6}{\color{blue}(+14.4)}
& \textbf{36.4}{\color{blue}(+13.4)}
& \textbf{66.0}{\color{blue}(+12.8)} \\

& AutoLink
& 45.2 & 63.3 & 64.4 & 70.2 & 23.0 & 53.2 \\

& DeepEye-SQL
& 31.4 & 66.3 & 49.0 & 49.0 & 14.1 & 42.0 \\

\midrule

\multirow{3}{*}{Gemma-4-31B}
& \textbf{\sys}
& \textbf{53.7}{\color{blue}(+16.0)}
& \textbf{69.7}{\color{blue}(+10.4)}
& \textbf{65.4}{\color{blue}(+9.6)}
& \textbf{75.0}{\color{blue}(+20.2)}
& \textbf{36.3}{\color{blue}(+21.0)}
& \textbf{60.0}{\color{blue}(+16.7)} \\

& AutoLink
& 37.7 & 52.9 & 55.8 & 54.8 & 15.3 & 43.3 \\

& DeepEye-SQL
& 24.0 & 59.3 & 43.3 & 42.3 & 8.5 & 35.5 \\

\bottomrule
\end{tabular}}
\end{table}

\subsection{Main Results}

\paragraph{Comparison with baselines.}
We compare \sys with AutoLink and DeepEye-SQL across five NL2SQL benchmarks:
Spider 2.0 Lite, BIRD, Archer-en, Archer-zh, and BEAVER.
To ensure a fair comparison, we assume \textit{no access to query log}
as additional disambiguation signals during inference, a setting consistent
across all methods. Table~\ref{tab:main_results} summarizes the results
across these benchmarks.
First, \sys achieves consistent gains across all benchmarks and
models, improving over the strongest baseline by 9.3\%,
12.8\%, and 16.7\% on average for GPT-5.3 Codex,
GPT-5, and Gemma-4-31B, respectively, demonstrating the generalizability
of our approach without reliance on any dataset-specific supervision.
Second, competing baselines fail to generalize across datasets.
AutoLink, submitted to Spider 2.0 leaderboard, ranks second compared to \sys but
is outperformed by DeepEye-SQL on BIRD. Conversely, DeepEye-SQL, which
is tuned for BIRD, underperforms AutoLink on every other benchmark.
This cross-dataset inconsistency reveals that prior methods overfit to
their target leaderboards, whereas \sys maintains top performance across
all five benchmarks.
Third, \sys generalizes across SQL dialects and languages.
Spider 2.0 and BEAVER together span Snowflake, BigQuery, SQLite, and
Oracle dialects, and \sys achieves the largest absolute gains on
these benchmarks, indicating robustness to dialect variation.
Furthermore, strong improvements on Archer-zh confirm cross-lingual
generalization.


\begin{table}
  \caption{Execution accuracy (EX) on subsampled AMBROSIA partitions with Gemma-4-31B and no synthetic query log. Gap is the drop from unambiguous to ambiguous EX.}
  \label{tab:ambrosia_gemma}
  \centering
  \small
  \begin{tabular}{lccc}
    \toprule
    \textbf{Methods} & \textbf{Ambiguous (EX)} & \textbf{Unambiguous (EX)} & \textbf{Gap} \\
    \midrule
    \sys & \textbf{92.0} & \textbf{94.7} & \textbf{2.7} \\
    AutoLink & 75.3 & 86.7 & 11.4 \\
    DeepEye-SQL & 64.7 & 72.0 & 7.3 \\
    \bottomrule
  \end{tabular}
\end{table}

\paragraph{Results on ambiguity-focused benchmark.}
To directly evaluate ambiguity resolution, we use AMBROSIA, an NL2SQL benchmark where each ambiguous question admits multiple valid interpretations. We evaluate \sys using Gemma-4-31B on a balanced 300-example split: 150 ambiguous questions and 150 unambiguous questions. The ambiguous set contains 50 examples from each ambiguity category (i.e., attachment, scope, and vague references)~\cite{saparina2024ambrosia}. Table~\ref{tab:ambrosia_gemma} shows that \sys achieves 92.0\% EX on ambiguous questions and 94.7\% EX on unambiguous questions, outperforming both DeepEye-SQL and AutoLink on both partitions. The ambiguous--unambiguous gap is also smallest for \sys: 2.7\%, compared to 11.4\% for AutoLink and 7.3\% for DeepEye-SQL. Thus, \sys retains most of its accuracy under ambiguous inputs, suggesting its robustness to semantic underspecification.

\begin{table}
  \caption{Effectiveness of synthetic query log. We report execution accuracy (EX).}
  \label{tab:ablation_logs}
  \centering
    \small
  \begin{tabular}{llcc}
    \toprule
    \textbf{Datasets} & \textbf{LLMs} & \textbf{w/o Log} & \textbf{w/ Log} \\
    \midrule
    \multirow{2}{*}{Spider 2.0} 
      & Gemma-4-31B    & 53.7 & \textbf{61.0} \\
      & GPT-5.3 Codex  & 69.3 & \textbf{72.0} \\
    \midrule
    \multirow{2}{*}{BEAVER}     
      & Gemma-4-31B    & 36.3 & \textbf{46.4} \\
      & GPT-5.3 Codex  & 30.1 & \textbf{38.8} \\
    \midrule
    \multirow{2}{*}{AMBROSIA (Ambiguous)}     
      & Gemma-4-31B    & 92.0 & \textbf{94.7} \\
      & GPT-5.3 Codex  & 89.3 & \textbf{93.3} \\
    \bottomrule
  \end{tabular}
\end{table}

\subsection{Impact of Synthetic Query Log}
\label{sec:ablation_logs}

We study the effectiveness of targeted synthetic query log on improving \sys beyond the no-log setting in Table~\ref{tab:main_results}. 
We choose Spider 2.0 Lite, BEAVER, and AMBROSIA to cover diverse SQL dialects, schema complexity, and ambiguous question types. 
For each dataset, we use the provided gold SQLs only as intent anchors for generating synthetic query log entries, following Sec.~\ref{sec:method:log}. 
Note, no gold SQL is exposed in the MCQs or added to the NL2SQL generation workflow. 
Instead, we derive ambiguity-focused MCQs, use them to disambiguate the synthetic question, and convert the selected option into an NL clarification stored in the synthetic query log. 
Examples are provided in Appendix~\ref{app:derived_mcq_examples}. 
Table~\ref{tab:ablation_logs} shows that the synthetic query log consistently improve execution accuracy across datasets and backbone models. 
On Spider 2.0 Lite, \sys with query log improves performance by 7.3\% and 2.7\% for Gemma-4-31B and GPT-5.3 Codex, respectively. On BEAVER, the gains are larger, with improvements of 10.1\% and 8.7\%, respectively. On AMBROSIA, query log further improve performance by 2.7\% and 4.0\%. These results show that targeted synthetic query log provide useful ambiguity grounding beyond schema information.

\textbf{Helpfulness of synthetic MCQ clarifications for SQL generation.}
Using an LLM-as-a-Judge~\cite{gu2024survey} described in Appendix~\ref{app:nl2sql-clarification-helpfulness-prompt}, we evaluate whether each synthetic MCQ clarification provides useful guidance for generating the intended SQL. 
Given the original question, the synthetic MCQ clarification, and the gold SQL if available, the judge assigns one of two labels:
\texttt{helpful} if the clarification guides the NL2SQL system toward the
intended SQL semantics, 
and \texttt{not\_helpful} if it provides little or no useful guidance for SQL generation. 
The judge labels 145/150 AMBROSIA examples (96.7\%), 207/209 BEAVER examples (99.0\%), and 421/456 Spider 2.0 Lite examples (92.3\%) as \texttt{helpful}. 
These results confirm that the synthetic MCQs generally provide useful guidance for generating the intended SQL.




\begin{table}[t]
  \caption{Effectiveness of ambiguity-aware SQL selection and probing across
datasets. ``Avg.'' is the mean across Spider 2.0 Lite, BEAVER, and AMBROSIA.}
  \label{tab:ablation_selector_probing}
  \centering
  \small
  \setlength{\tabcolsep}{3pt}
  \begin{tabular}{lcccc}
    \toprule
    \textbf{Method} & \textbf{Spider 2.0 Lite} & \textbf{BEAVER} & \textbf{AMBROSIA} & \textbf{Avg.} \\
    \midrule
    Majority Voting & 41.1 & 41.6 & 92.6 & 58.4 \\
    LLM-as-a-Judge & 46.1 & 43.1 & 92.0 & 60.4 \\
    LLM-as-a-Judge (Seed Selection) + Probing & 61.0 & 46.4 & 94.7 & 67.4 \\
    \bottomrule
  \end{tabular}
\end{table}

\subsection{Impact of Ambiguity-Driven Probing}
\begin{table}[t]
 \caption{Accuracy gains from ambiguity-driven probing on difficult Spider 2.0 Lite instances.}
  \label{tab:spider2_probing_gains_hard}
  \centering
  \small
  \setlength{\tabcolsep}{4pt}
  \begin{tabular}{lccc}
    \toprule
    \textbf{Bucket} & \textbf{w/o Probing} & \textbf{w/ Probing} & \textbf{Gain} \\
    \midrule
Sparsely correct (1--4/10) & 61.1 & 81.5 & +20.4 \\
Never correct (0/10) & 0.0 & 30.6 & +30.6 \\
    \bottomrule
  \end{tabular}
\end{table}

We evaluate probing on Spider 2.0, BEAVER, and AMBROSIA, covering diverse SQL dialects and ambiguous question types. All methods are evaluated in the query-log setting described in Sec~\ref{sec:ablation_logs}. We compare our full method, LLM-as-a-Judge (LLMaaJ)-based seed selection with ambiguity-driven probing, against two baselines: Majority Voting and naive LLMaaJ. The LLMaaJ prompt is provided in Appendix~\ref{app:sql-selection-judge-prompt}. As shown in Table~\ref{tab:ablation_selector_probing}, our method, on average, improves performance by 9.0\% over Majority Voting (67.4 vs.\ 58.4) and 7.0\% over LLMaaJ (67.4 vs.\ 60.4), demonstrating the effectiveness of ambiguity-driven probing beyond selection alone.

\paragraph{Probing primarily benefits hard instances.} 
To understand the impact of ambiguity-driven probing, we characterize difficulty on Spider 2.0-Lite by running \sys with Gemma-4-31B ten times per instance and recording the number of correct executions. 
We label instances with zero correct runs as \textit{never correct} and those with one to four correct runs as \textit{sparsely correct}; see Appendix~\ref{app:bucket-analysis} for details. 
Our analysis focuses on these hard cases, where the system succeeds in at most 4 out of 10 runs.
As shown in Table~\ref{tab:spider2_probing_gains_hard}, adding probing to LLMaaJ-based selection yields the largest improvements on the hardest instances. For \textit{never correct} instances, probing improves execution accuracy from 0\% to 30.6\%, recovering 82 previously unsolved cases. For \textit{sparsely correct} instances, probing improves execution accuracy from 61.1\% to 81.5\%. These results suggest that probing is particularly effective for difficult questions that selection alone cannot resolve.

\textbf{Quality of ambiguity probing.}
We evaluate ambiguity probing quality using LLMaaJ on Spider~2.0-Lite outputs generated by \sys with Gemma-4-31B. We measure three binary metrics: \textit{Probing Groundedness} (whether probes are justified by the question and grounded in implementation differences or ambiguity taxonomy evidence), \textit{Resolution Correctness} (whether the selected resolution matches user intent), and \textit{SQL Repair Faithfulness} (whether the repaired SQL reflects the chosen resolution without unrelated changes). We audit 50 proportionally sampled instances spanning cases where probing improves, preserves, or fails to change execution outcomes (e.g., wrong$\rightarrow$correct and wrong$\rightarrow$wrong). Details are provided in Appendix~\ref{app:probe-audit}. On the audited set, ambiguity probing achieves 0.94 probing groundedness, 0.90 resolution correctness, and 0.96 SQL repair faithfulness (0--1 scale), indicating that probes are generally well-grounded and downstream SQL repairs are highly faithful.

\vspace{-5pt}
\section{Conclusion}
\label{sec:conclusion}
\vspace{-5pt}

We present \sys, a novel ambiguity-aware framework for autonomous ambiguity resolution in text-to-SQL. \sys leverages synthetic query log, multi-SQL disagreement signals, and database-grounded probing to identify and resolve unresolved user intent. It detects ambiguity through structured reasoning over divergent SQL candidates, maps conflicts into interpretable ambiguity dimensions, and applies evidence-driven refinement to repair or select the final SQL query. Experiments on six public benchmarks demonstrate that \sys improves execution accuracy by 13.0\% on average over state-of-the-art baselines, with gains of up to 16.7\% on ambiguous questions, without requiring human interaction at inference time.
\section{Limitations}
\label{sec:limitations}
\vspace{-5pt}

\sys relies on LLM-based reasoning for ambiguity detection, interpretation, and SQL refinement, and therefore inherits limitations of the underlying models. In particular, ambiguity resolution depends on the diversity and semantic coverage of generated SQL candidates: when all candidates converge to the same incorrect interpretation, unresolved intent may remain undetected. Some ambiguity dimensions also require world knowledge or long-range reasoning beyond current frontier LLM capabilities, leading to incomplete or incorrect resolutions.

\bibliographystyle{plainnat}
\bibliography{somasql}

\nolinenumbers
\appendix

\appendix
\newpage

\section{Related Works}
\label{app:more_related}

{\bf Beyond NL2SQL: Ambiguity in NL.}
Ambiguity is a well-studied challenge in NLP broadly. AmbigQA~\cite{min2020ambigqa} establishes that open-domain questions frequently admit multiple valid interpretations. Rao and Daumé~\cite{rao2018learning} frame clarification question generation as maximizing expected information gain, while Aliannejadi et al.~\cite{aliannejadi2019asking} demonstrate the value of proactive clarification in conversational search. Keyvan and Huang~\cite{keyvan2022survey} survey ambiguity handling in task-oriented dialogue. Unlike these approaches, our work exploits database-intrinsic signals (e.g., SQL syntax and schema metadata) that are uniquely available in the NL2SQL setting to derive rich signals for disambiguation.

\section{Notation}
\label{sec:math-notation}

Table~\ref{tab:notation} summarizes the main notation used throughout the paper. 

\begin{table}[h]
\centering
\caption{Notations and operators.}
\small
\begin{tabular}{ll}
\toprule
\textbf{Notation} & \textbf{Meaning} \\
\midrule
$x, S, D$ & Natural-language question, database schema, and database instance. \\
$q, q^\star, \hat q$ & Candidate SQL, gold SQL, and final generated SQL program. \\
$\mathcal{Q}(S)$ & Set of valid executable SQL programs over schema $S$. \\
$\theta$ & Configuration of the underlying NL2SQL pipeline. \\
$P_\theta(q \mid \cdot)$ & SQL generation distribution under pipeline configuration $\theta$. \\
\midrule
$Z$ & Random variable representing the user's latent intent. \\
$z^\star, \hat{z}$ & The true intent for the given question-schema pair, and the intent estimated by \\
 & \sys, respectively. \\
$\mathcal{Z}(x,S)$ & Set of plausible intent assignments for $(x,S)$. \\
$H(Z \mid x,S)$ & Entropy measuring residual ambiguity in the question. \\
\midrule
$\mathrm{Plan}, \mathrm{Gen}, \mathrm{Critique}$ & Planning, dialect-aware SQL generation, and critique modules. \\
$\pi_k, q_k$ & Plan and generated SQL in the $k$-th run. \\
$\mathcal{Q}_K(x,S)$ & Diagnostic set of $K$ sampled SQL candidates. \\
$q^{(0)}$ & Seed SQL selected for probe-grounded repair. \\
$\mathrm{Select}$ & LLM-based procedure for selecting the seed SQL from $\mathcal{Q}_K(x,S)$. \\
\midrule
$\mathcal{L}, \mathcal{C}, \mathcal{C}_x$ & Query log, constructed ambiguity resolution set, and retrieved relevant ambiguity\\
 & resolution records. \\
$\mathrm{Retrieve}, \mathrm{Rerank}$ & Retrieval and reranking operators used to obtain $\mathcal{C}_x$. \\
$\Delta$ & Implementation-level SQL differences among diagnostic candidates. \\
$(Z,\mathcal{O},\mathcal{E})$ & Ambiguity record: dimension, options, and supporting SQL evidence. \\
\midrule
$Z^{\mathrm{cand}}, Z^{\mathrm{tax}}$ & Candidate-induced and taxonomy-induced ambiguity dimensions. \\
$R$ & Grounding report from executed probe SQLs. \\
$\mathrm{Probe}$ & Procedure that generates and executes diagnostic SQL probes. \\
$\mathrm{Resolve}$ & Resolver that estimates $\hat z$ from ambiguity evidence and probe results. \\
\midrule
$Y$ & Automated clarification signal constructed by \sys. \\
$U$ & Ideal human clarification signal obtained from the MCQ interaction. \\
\bottomrule
\end{tabular}
\label{tab:notation}
\end{table}

\section{Algorithm - Synthetic Query Log}
\begin{algorithm}[H]
\caption{\sys: Ambiguity Supervision from Synthetic SQL Query Log}
\label{alg:synthetic-log}
\begin{algorithmic}[1]
\Require SQL query log $\mathcal{L} = \{q_i^{\log}\}_{i=1}^N$, schema $S$, number of candidates $K$
\Ensure Ambiguity resolution set $\mathcal{C}$
\State Initialize ambiguity resolution set:
\State \hspace{1em} $\mathcal{C} \gets \varnothing$

\For{each intent anchor $q_i^{\log} \in \mathcal{L}$}
    \State \textbf{Intent Anchor Generation:}
    \State Generate fully specified NL question:
    \State \hspace{1em} $x_i \gets \mathrm{Describe}(q_i^{\log}, S)$
    
    \State Generate underspecified variants by removing details along common ambiguity dimensions:
    \State \hspace{1em} $\{x_i^{(j)}\}_{j=1}^{m_i} \gets \mathrm{Underspecify}(x_i)$

    \For{each underspecified question $x_i^{(j)}$}

        \State \textbf{Candidate Disagreement Extraction:}
        \State Generate candidate SQL queries:
        \State \hspace{1em} $\mathcal{Q}_K(x_i^{(j)}, S) \subset \mathcal{Q}(S)$
        
        \State Present all $K$ candidates to an LLM and extract implementation-level discrepancies:
        \State \hspace{1em} $\Delta_i^{(j)} \gets \mathrm{ConsolidateUniqueDiffs}(\mathcal{Q}_K(x_i^{(j)}, S))$

        \State \textbf{Ambiguity Supervision Construction:}
        \State Convert discrepancies into structured ambiguity records:
        \State \hspace{1em} $\{(Z,\mathcal{O},\mathcal{E})\} \gets \mathrm{ExtractAmbiguities}(\Delta_i^{(j)})$

        \For{each ambiguity record $(Z,\mathcal{O},\mathcal{E})$}

            \State Identify anchor-supported option:
            \State \hspace{1em} $o^* \gets \mathrm{MatchAnchor}(q_i^{\log}, \mathcal{O})$

            \If{$o^* \notin \mathcal{O}$}
                \State Add grounded option derived from $q_i^{\log}$ to $\mathcal{O}$
            \EndIf

            \State Construct MCQ-style supervision item:
            \State \hspace{1em} $c \gets (x_i^{(j)}, S, Z, \mathcal{O}, o^*, \mathcal{E})$

            \State Add supervision item to ambiguity resolution set:
            \State \hspace{1em} $\mathcal{C} \gets \mathcal{C} \cup \{c\}$

        \EndFor
    \EndFor
\EndFor

\State \Return $\mathcal{C}$

\end{algorithmic}
\end{algorithm}

\section{Algorithm - Ambiguity Discovery, Probing, and Repair}
\label{pseudo-code-probing}

\begin{algorithm}[H]
\caption{\sys: Ambiguity Discovery, Probing, and Repair}
\label{alg:ambiguity-probing}
\begin{algorithmic}[1]

\Require Question $x$, schema $S$, retrieved ambiguity records $\mathcal{C}_x$ (optional), database $D$, number of diagnostic candidates $K$
\Ensure Final SQL query $\hat q$

\State \textbf{Diagnostic Candidate Generation:}
\State Sample diagnostic SQL candidates:
\State \hspace{1em} $\mathcal{Q}_K(x,S)$

\State Select seed SQL using an LLM judge:
\State \hspace{1em} $q^{(0)} \gets \mathrm{Select}(x,S,\mathcal{Q}_K(x,S))$

\vspace{0.3em}
\State \textbf{Ambiguity Discovery:}

\State Extract candidate-induced ambiguity dimensions and implementation evidence:
\State \hspace{1em} $\{(Z_t^{\mathrm{cand}},\mathcal{E}_t^{\mathrm{cand}})\}_{t=1}^{m_c}$
\State \hspace{2em} $\gets \mathrm{ExtractCandidateAmbiguities}(\mathcal{Q}_K(x,S))$

\State Extract taxonomy-induced ambiguity dimensions:
\State \hspace{1em} $\{Z_j^{\mathrm{tax}}\}_{j=1}^{m_t}$
\State \hspace{2em} $\gets \mathrm{ExtractTaxonomyAmbiguities}(x,S)$

\State Form probing targets:
\State \hspace{1em}
$\mathcal{Z}
\gets
\{(Z_t^{\mathrm{cand}},\mathcal{E}_t^{\mathrm{cand}})\}_{t=1}^{m_c}
\cup
\{Z_j^{\mathrm{tax}}\}_{j=1}^{m_t}$

\vspace{0.3em}
\State \textbf{Database Probing:}

\For{each candidate-induced dimension $(Z_t^{\mathrm{cand}},\mathcal{E}_t^{\mathrm{cand}})$}

    \State Construct probe SQLs using implementation evidence:
    \State \hspace{1em}
    $\mathcal{P}_t
    \gets
    \mathrm{BuildCandidateProbes}
    (Z_t^{\mathrm{cand}},\mathcal{E}_t^{\mathrm{cand}},x,S,D)$

    \State Execute probes and collect grounding evidence:
    \State \hspace{1em}
    $R_t^{\mathrm{cand}}
    \gets
    \mathrm{Execute}(\mathcal{P}_t,D)$

\EndFor

\For{each taxonomy-induced dimension $Z_j^{\mathrm{tax}}$}

    \State Instantiate plausible alternatives from schema and database:
    \State \hspace{1em}
    $\mathcal{A}_j
    \gets
    \mathrm{InstantiateAlternatives}(Z_j^{\mathrm{tax}},S,D)$

    \State Construct probe SQLs:
    \State \hspace{1em}
    $\mathcal{P}_j
    \gets
    \mathrm{BuildTaxonomyProbes}
    (Z_j^{\mathrm{tax}},\mathcal{A}_j,x,S,D)$

    \State Execute probes and collect grounding evidence:
    \State \hspace{1em}
    $R_j^{\mathrm{tax}}
    \gets
    \mathrm{Execute}(\mathcal{P}_j,D)$

\EndFor

\State Aggregate grounding report:
\State \hspace{1em}
$R
\gets
\{
R_t^{\mathrm{cand}}
\}_{t=1}^{m_c}
\cup
\{
R_j^{\mathrm{tax}}
\}_{j=1}^{m_t}$

\vspace{0.3em}
\State \textbf{Ambiguity Resolution:}

\State Resolve ambiguity dimensions using retrieved records and grounding evidence:
\State \hspace{1em}
$\hat z
\gets
\mathrm{Resolve}
\!\left(
x,S,\mathcal{C}_x,
\{(Z_t^{\mathrm{cand}},\mathcal{E}_t^{\mathrm{cand}})\}_{t=1}^{m_c},
\{Z_j^{\mathrm{tax}}\}_{j=1}^{m_t},
R
\right)$

\vspace{0.3em}
\State \textbf{SQL Repair and Final Generation:}

\State Repair the seed SQL under resolved intent:
\State \hspace{1em}
$\hat q
\sim
P_\theta(q \mid x,S,\hat z,q^{(0)},R)$

\State \Return $\hat q$

\end{algorithmic}
\end{algorithm}

\section{Proof and Discussion of Proposition~\ref{prop:somasql-ambiguity-reduction}}
\label{app:ambiguity-reduction-proof}

\begin{proof}
Given a question-schema pair $(x,S)$, denote $Z$ as the latent user intent, $U$ the ideal human clarification signal induced by the MCQ interaction, and $Y$ the automated clarification proxy constructed by \sys.

We assume that, conditional on $(x,S)$, $Y$ is a noisy proxy for $U$, formalized by the Markov chain
\[
Z - U - Y \mid (x,S).
\]
Equivalently, once the ideal clarification signal $U$ is known, the proxy $Y$ provides no additional information about the latent intent:
\[
Z \perp\!\!\!\perp Y \mid U,x,S .
\]

By the conditional data-processing inequality~\citep{reza1994introduction},
\[
I(Z;Y\mid x,S) \leq I(Z;U\mid x,S).
\]
Thus, the information that \sys's automated proxy carries about the latent intent is upper-bounded by the information available from ideal human clarification.

\end{proof}

\paragraph{Why the Markov assumption is appropriate.}
The assumption does not mean that \sys observes $U$ directly.
Rather, it models $U$ as the ideal ambiguity-resolving signal: if a user were asked the MCQ produced from the discovered ambiguity options, their response would directly specify the intended resolution.
The automated proxy $Y$ is designed to approximate this clarification signal using only system-generated evidence:
\[
Y \triangleq
\Bigl(
  \mathcal{Q}_K(x,S),\;
  \mathcal{C}_x,\;
  \{(Z_t^{\mathrm{cand}},\mathcal{E}_t^{\mathrm{cand}})\}_{t=1}^{m_c},\;
  \{Z_j^{\mathrm{tax}}\}_{j=1}^{m_t},\;
  R
\Bigr).
\]
Here, $\mathcal{Q}_K(x,S)$ is the diagnostic candidate set, $\mathcal{C}_x$ is the retrieved confusion set,
$\{(Z_t^{\mathrm{cand}},\mathcal{E}_t^{\mathrm{cand}})\}_{t=1}^{m_c}$ are candidate-induced ambiguity dimensions with SQL evidence,
$\{Z_j^{\mathrm{tax}}\}_{j=1}^{m_t}$ are taxonomy-induced ambiguity dimensions, and $R$ is the probe-execution grounding report.
Together, these components constitute the evidence \sys uses to approximate the ambiguity resolution that would otherwise be supplied by the user.

\paragraph{When the proxy approaches the human ceiling.}
The gap
\[
I(Z;U\mid x,S)-I(Z;Y\mid x,S)
\]
measures the remaining value of explicit human clarification over \sys's automated proxy.
This gap is small when the historical log contains resolved examples of the same or similar ambiguity pattern.
In that case, retrieved confusion records act as synthetic human feedback: they identify ambiguity dimensions and historically supported resolutions that approximate the clarification a user would provide.

This is especially relevant for database workloads, which are often repetitive for a fixed schema.
As historical log accumulates coverage of recurring intent patterns, $\mathcal{C}_x$ becomes more likely to retrieve useful precedents.
For such covered patterns, $Y$ can approach the information content of $U$.
For rare or novel intents, however, the log lacks precedent, and the residual gap remains large.
These are precisely the cases where explicit human clarification is most valuable.

\paragraph{Uncertainty-reduction interpretation.}
Although the main proposition focuses on the human-clarification ceiling, the same evidence variable $Y$ also admits a standard uncertainty-reduction interpretation.
We quantify the ambiguity of a question by the conditional entropy $H(Z\mid x,S)$ of the underspecified intent $Z$ given $x$ and $S$.
High entropy indicates that many interpretations remain plausible after observing the question and schema, whereas low entropy indicates that only a few interpretations are likely.

By the definition of conditional mutual information,
\[
I(Z;Y\mid x,S)
=
H(Z\mid x,S)
-
\mathbb{E}_{Y\mid x,S}
\left[
H(Z\mid x,S,Y)
\right].
\]

Rearranging the expression gives
\[
\mathbb{E}_{Y\mid x,S}
\left[
H(Z\mid x,S,Y)
\right]
=
H(Z\mid x,S)
-
I(Z;Y\mid x,S).
\]

Therefore, if
\[
I(Z;Y\mid x,S) > 0,
\]
then
\[
\mathbb{E}_{Y\mid x,S}
\left[
H(Z\mid x,S,Y)
\right]
<
H(Z\mid x,S).
\]

Hence, conditioning on the evidence $Y$ produced by \sys decreases the expected posterior entropy of the latent intent $Z$. Equivalently, the observation $Y$ provides information about $Z$, reducing uncertainty in expectation.

This identity clarifies the contribution of each component of $Y$.
Diagnostic candidates $\mathcal{Q}_K(x,S)$ expose ambiguity through implementation-level disagreements.
Taxonomy-guided detection adds ambiguity dimensions that candidate disagreement may miss.
Confusion-set retrieval $\mathcal{C}_x$ supplies historically resolved ambiguity decisions when relevant log is available.
Probe execution contributes database-grounded evidence about the consequences of candidate resolutions.
Each component can increase the information that $Y$ carries about $Z$ beyond what is available from the question and schema alone.

\section{Dataset Details}
\label{appendix:datasets}

Spider 2.0 Lite contains 547 questions over 158 large, heterogeneous databases with up to 800+ columns and multiple SQL dialects (e.g., SQLite, Snowflake, BigQuery), emphasizing multi-step reasoning and dialect robustness. BIRD-dev contains 1,534 questions over 11 real-world databases with complex queries, realistic schemas, and noisy values. Archer-dev evaluates reasoning-intensive NL2SQL in English and Chinese, with 104 questions over 2 databases per split involving arithmetic, hypothetical, and commonsense reasoning. BEAVER (Oracle dialect) contains 209 enterprise-oriented questions over 6 databases with business-specific semantics and substantial schema and value ambiguity; we use the Oracle SQL version~\cite{beaver_oracle_conversion} to evaluate dialect robustness. Ambrosia~\cite{saparina2024ambrosia} evaluates robustness to semantic ambiguity with 4,242 questions containing multiple valid interpretations and corresponding SQL programs.

\section{Additional Details about \sys}
\label{appendix:additional-details}

Realistic databases often contain hundreds of tables and thousands of columns. Including the full schema in the prompt is therefore impractical: it increases context length and can cause prompt context overflow. Before running the main workflow, \sys first enriches the raw schema with additional semantic information, such as column descriptions, value hints, and index-derived evidence~\cite{XiYanSQL,chen2025enrichindex}. It then applies agentic schema~\cite{wang2025autolink} to select the subset of tables, columns, and relationships most relevant to the input question $x$. The downstream workflow operates only on this linked schema subset, which we denote by $S$ for notational simplicity. This preprocessing step keeps prompts compact while preserving the schema evidence needed for SQL generation.

We also tune the system prompt used by the planner. Following the prompt-refinement strategy of~\citet{liu2025oraplan}, we run a pilot study on a small held-out subset of the Archer training data~\cite{zheng2024archer}, inspect recurring failure modes, and incorporate the resulting corrective guidelines into the planner's system prompt. For dialect-specific instructions for SQLite, Snowflake, and BigQuery SQL, we follow~\citet{wang2025autolink} and explicitly condition SQL generation on the target database dialect. For Oracle dialect instructions, we follow~\cite{oracle_db_skills_sql_dev}.

\subsection{Prompt for SQL Plan Generation}
\label{app:plan-prompt}

\begin{promptbox}{SQL Plan Generation Prompt}

\textbf{Role}

You are a Plan Decomposer. Your task is to read a natural language question, a SQL database schema, and optional additional information. Then output a short, step-by-step plan in natural language that MUST be followed to write the SQL query.

\vspace{0.5em}

\textbf{Inputs}
\begin{itemize}
\item Question: \{question\}
\item Additional Information (Optional): \{external\_info\}
\end{itemize}

\vspace{0.5em}

\textbf{Output Format (Strict)}
\begin{enumerate}
\item ...
\item ...
\item ...
\end{enumerate}

Return columns: \textless ...\textgreater

\vspace{0.5em}

\textbf{General Rules}
\begin{itemize}
\item Use ONLY column and table names in schema. Do not invent names.
\item Prefer scalar subqueries for single values; use JOINs only when necessary.
\item Keep each step focused (3--6 steps total).
\item End with exactly one line:
\texttt{Return columns: \textless exact list\textgreater}
\item Use at most two guardrails per plan.
\end{itemize}

\vspace{0.5em}

\textbf{Projection Minimality (Strict)}
\begin{itemize}
\item Return only explicitly requested fields, in exact order.
\item ``Which \textless entity\textgreater\ ... ?'' $\rightarrow$ return only entity name.
\item Single-value questions $\rightarrow$ return a single scalar column.
\item ``Names and ages'' $\rightarrow$ exactly two columns: name, age.
\item Do not include helper or intermediate values unless requested.
\end{itemize}

\vspace{0.5em}

\textbf{Schema}

\begin{verbatim}
{schema}
\end{verbatim}

\end{promptbox}

\subsection{Prompt for SQL Generation}
\label{app:sql-prompt}

\begin{promptbox}{Prompt for SQL Generation}
You are an expert NL2SQL translator. Your task is to convert natural language questions into accurate and executable SQL queries for \{dialect\} databases.

\vspace{0.4em}

\textbf{Requirements}
\begin{itemize}
\item Interpret complex logic (e.g., relative time, comparisons, conditional joins).
\item Use correct SQL syntax and valid table relationships.
\item Use only valid \{dialect\} syntax.
\item Prefer UPPERCASE SQL keywords.
\item Output only raw SQL (no comments, no explanations, no markdown).
\end{itemize}

\vspace{0.4em}

\textbf{Inputs}
\begin{itemize}
\item Question: \{question\}
\item Execution Plan: \{plan\}
\item Additional Information: \{external\_info\}
\item Database Schema: \{schema\_txt\}
\end{itemize}

\vspace{0.4em}

\textbf{Output}
\begin{itemize}
\item A single valid SQL query.
\item No comments, no explanations, no extra text.
\end{itemize}

\vspace{0.4em}

\textbf{SQL Query}

\end{promptbox}

\subsection{Prompt for SQL Query Plan Correction}
\label{app:compile-prompt}

\begin{promptbox}{SQL Query Plan Correction Prompt}

\textbf{Role}

You are an experienced and professional database administrator tasked with analyzing and correcting SQL queries that are potentially wrong.

\vspace{0.5em}

\textbf{Context}

You are provided with:
\begin{itemize}
\item A \{dialect\} database schema
\item A user question
\item A proposed SQL query intended to answer the question
\item An error report for the proposed SQL query
\end{itemize}

\vspace{0.5em}

\textbf{Database Schema}
\begin{itemize}
\item The schema consists of table descriptions.
\item Each table contains multiple column descriptions.
\item Example values for each column are provided.
\end{itemize}

\vspace{0.5em}

\textbf{Instructions}
\begin{itemize}
\item Carefully review the provided context.
\item Analyze the error report.
\item If the SQL is incorrect, generate a corrected SQL query that answers the question.
\item Use only valid \{dialect\} syntax.
\item Prefer UPPERCASE SQL keywords.
\item Avoid unnecessarily complex queries.
\item Prefer JOINs over overly nested \texttt{EXISTS} clauses when equivalent.
\item Output only the SQL query.
\item Do not include explanations or comments.
\end{itemize}

\vspace{0.5em}

\textbf{Inputs}
\begin{itemize}
\item Question: \{question\}
\item External Knowledge: \{external\_info\}
\item Previous Compile Error Log: \{previous\_compile\_error\_log\}
\item Previous SQL: \{previous\_sql\}
\item Relevant Function Documentation: \{function\_documentation\}
\item Database Schema: \{schema\_txt\}
\end{itemize}

\vspace{0.5em}

\textbf{Task}

Analyze the error report and generate a corrected SQL query that answers the user question.

\vspace{0.5em}

\textbf{Output}

\begin{verbatim}
<correct SQL query>
\end{verbatim}

\end{promptbox}

\subsection{Prompt for SQL Critique}
\label{app:critique-prompt}

\begin{promptbox}{}

You are an experienced database administrator tasked with validating SQL and correcting it only when there is clear evidence of an error.

\vspace{0.25em}

\textbf{Inputs}
\begin{itemize}
\item A \{dialect\} database schema
\item A user question
\item A proposed SQL query
\item An error report, which may contain false positives
\end{itemize}

\vspace{0.25em}

\textbf{Database Schema}
\begin{itemize}
\item Table descriptions
\item Column descriptions
\item Frequent column values
\end{itemize}

\vspace{0.25em}

\textbf{Task}
\begin{itemize}
\item Analyze the error report.
\item Keep the SQL unchanged if it is already correct.
\item Otherwise, generate a corrected SQL query.
\end{itemize}

\vspace{0.25em}

\textbf{Editing Policy}
\begin{itemize}
\item If the error report is UNKNOWN, N/A, FALSE, or only warnings, return the original SQL unchanged.
\item Do not rewrite for style, readability, or optimization.
\item Edit only when there is a concrete, localizable defect.
\item If unsure, output the original SQL verbatim.
\end{itemize}

\vspace{0.25em}

\textbf{Instructions}
\begin{itemize}
\item Use valid \{dialect\} syntax only.
\item Output only SQL in a code block (no comments or explanations).
\item Avoid unnecessary complexity (e.g., prefer \texttt{JOIN} over \texttt{EXISTS} when equivalent).
\end{itemize}

\vspace{0.25em}

\textbf{Dialect Rules}

\{dialect\_rules\}

\vspace{0.25em}

\textbf{Inputs}
\begin{itemize}
\item Question: \{question\}
\item Old SQL:
\end{itemize}

\begin{quote}
\ttfamily
\{prev\_sql\}
\end{quote}

\begin{itemize}
\item Error Report:
\end{itemize}

\begin{quote}
\ttfamily
\{error\_report\}
\end{quote}

\begin{itemize}
\item Database Info: \{schema\_txt\}
\end{itemize}

\vspace{0.25em}

Determine whether the SQL requires correction and, if so, generate the corrected query.

\vspace{0.25em}

\textbf{Final SQL}

\end{promptbox}

\subsection{Prompt for Ambiguity Driven SQL Probing and Repair}
\label{app:probe-repair-prompt}

\begin{promptbox}{Prompt for Ambiguity Probing SQL Repair}

You are using \texttt{\$sql-error-fixing-core} skill to repair \{instance\_id\} with ambiguity driven SQL probing and repair. Your goal is to repair or select the SQL only when probe evidence supports a concrete correction or selection.

\vspace{0.25em}

\textbf{Instance Context}
\begin{itemize}
\item Dataset profile: \{dataset\_name\}
\item Current SQL: \{before\_fix\_sql\_path\}
\item Seed run name: \{seed\_run\_name\}
\item Seed source run for current SQL: \{seed\_source\_run\_name\}
\item Question text available: \{question\_text\_available\}
\item Question source: \{question\_text\_source\}
\item Question: \{question\_text\}
\item Seed mode: \{seed\_mode\}
\end{itemize}

\vspace{0.25em}

\textbf{Candidate SQL Context}
\begin{itemize}
\item Candidate SQL count from run\_records: \{candidate\_sql\_count\}
\item Candidate run names: \{candidate\_run\_names\}
\item Candidate SQL artifacts dir: \{candidate\_sql\_dir\}
\item Candidate SQL manifest path: \{candidate\_sql\_manifest\_path\}
\item Candidate SQL context: \{candidate\_sql\_overview\}
\end{itemize}

\vspace{0.25em}

\textbf{Probing Policy}
\begin{itemize}
\item Save all probing outputs under \{probe\_reports\_instance\_dir\}.
\item Use \{probe\_reports\_root\} as the probing reports root for this run.
\item If implementation diff is available, create 8--9 clarification probes total with exactly 3 implementation-diff-grounded probes and 5--6 baseline taxonomy-grounded probes.
\item If implementation diff is unavailable, create exactly 9 baseline taxonomy-grounded probes and 0 implementation-diff-grounded probes.
\item Treat implementation diff only as ambiguity hypotheses.
\item Do not patch directly from diff; patch only from probe evidence.
\end{itemize}

\vspace{0.25em}

\textbf{Scope and System Prompts}
\begin{itemize}
\item Scan scope policy: \{scan\_scope\_instructions\}
\item Read scope instructions: \{read\_scope\_instructions\}
\item SQL fixing system prompt: \{sql\_fixing\_system\_prompt\}
\item Probing resolution system prompt: \{probing\_resolution\_system\_prompt\}
\item System prompt files: sql\_fixing=\{sql\_fixing\_system\_prompt\_path\}, probing\_resolution=\{probing\_resolution\_system\_prompt\_path\}
\end{itemize}

\vspace{0.25em}

\textbf{Repair Target}
\begin{itemize}
\item Write the final repaired SQL to \{fixed\_sql\_path\}.
\item Save results under this exact root directory only: \{save\_results\_dir\}.
\end{itemize}

\vspace{0.25em}

Now perform ambiguity probing, resolve the SQL only from probe evidence, and produce the repaired SQL.

\vspace{0.25em}

\textbf{Final SQL}

\end{promptbox}

\subsection{Ambiguity-Driven SQL Probing and Repair Contract}
\label{app:ambi-resolution-prompt}

We implement ambiguity-driven SQL probing and repair with the \texttt{sql-error-fixing-core} skill, whose detailed repair contract is defined below. The module processes one instance at a time and uses probe-grounded evidence to decide whether to keep the current SQL, select an existing candidate, or repair the query.

\begin{promptbox}{Ambiguity-Driven SQL Probing and Repair Workflow}

\begin{enumerate}
\item Input: question, seed SQL (primary patch target), candidate SQLs, implementation difference report of candidate SQLs, schema evidence, and optional external info.

\item Generate a clarifying-question plan.
\begin{enumerate}
\item For each question, assign \texttt{question\_id}, taxonomy (\texttt{AmbiSchema}, \texttt{AmbiValue}, \texttt{AmbiIntent}), \texttt{ambiguity\_question}, \texttt{why\_high\_impact}, and \texttt{probe\_sql\_rationale}.
\item Generate exactly 8--9 questions.
\item If implementation-diff is available, exactly 3 must be diff-grounded and the rest taxonomy-grounded; otherwise all taxonomy-grounded.
\end{enumerate}

\item Generate and execute ambiguity probes.
\begin{enumerate}
\item \texttt{AmbiSchema}: ambiguous schema mappings (table/column/grain/metric).
\item \texttt{AmbiValue}: question values do not directly align with database values, leading to ambiguous filter literals.
\item \texttt{AmbiIntent}: ambiguous SQL semantics (e.g., \texttt{ORDER BY} vs.\ \texttt{GROUP BY}).
\item Write one probe SQL per question and execute all probes, saving SQL and CSV artifacts.
\end{enumerate}

\item Convert probe results into resolved assumptions.

\item Apply a minimal probe-evidence-backed SQL patch.
\begin{enumerate}
\item Patch only the seed SQL baseline; use candidates as reference only.
\item Apply changes only when supported by probe evidence; keep edits minimal and localized.
\item Use implementation diff as hypothesis context only, never as a patch source.
\item Ensure all rewrites are probe-evidence-backed (\texttt{ambiguity\_resolutions}); otherwise retain the baseline SQL.
\end{enumerate}

\item Run final sanity and freeze.
\begin{enumerate}
\item Execute final SQL with \texttt{execute\_sql.py}.
\item If execution fails or returns empty, fall back to baseline SQL.
\item Always produce \texttt{fixed\_sql/\textless instance\_id\textgreater.sql}; limit to 3 attempts.
\end{enumerate}

\end{enumerate}

\vspace{0.25em}

\textbf{Constraints}

\begin{itemize}
\item Do not read or use gold SQL or gold execution during fixing.
\item No SQL patch before probe evidence exists.
\item If implementation diff is provided, treat it as hypothesis-only context.
\item Never patch SQL directly from implementation-diff alternatives or snippets.
\item Patch assumptions must be traceable to probe outputs; if any consensus item looks inconsistent with question or schema, validate by probe before patching.
\item Probe execution and sanity checks must use \texttt{execute\_sql.py}.
\end{itemize}

\vspace{0.25em}

\textbf{Outputs}

\begin{itemize}
\item \texttt{fixed\_sql/\textless instance\_id\textgreater.sql}.
\item Probe SQL and CSV artifacts.
\item Probe report and summary files.
\item Consolidated ambiguity-resolution metadata.
\end{itemize}

\end{promptbox}

\subsection{Prompt for Detailed NL Question Generation in Synthetic Query Log Generation}
\label{sec:nl-question-generation}

We generate synthetic query log by reconstructing precise natural-language questions from analytical logic. The following prompt instructs the model to reverse-engineer detailed and unambiguous business questions that preserve the full computational semantics of the underlying SQL.

\begin{promptbox}{Prompt for Detailed NL Question Generation}

You are an expert data analyst who reconstructs precise business questions from analytical logic. Your task is to reverse-engineer a natural-language question that fully specifies the computation implied by the given analytical logic.

\vspace{0.25em}

\textbf{Inputs}
\begin{itemize}
\item Backend/Dialect: \{backend\}
\item Database ID: \{db\_id\}
\item Schema: \{references\}
\item Gold SQL: \{gold\_sql\}
\end{itemize}

\vspace{0.25em}

\textbf{Process}

Carefully reconstruct the full analytical workflow before writing the question. Identify:
\begin{itemize}
\item The exact population and all filtering conditions, including status, date, category, and NULL handling.
\item Any lookup, mapping, or normalization steps (e.g., string transformations or case normalization).
\item Entity relationships and how records are brought into scope.
\item All intermediate metrics, transformations, and timestamp or date extraction steps.
\item Ranking, scoring, segmentation, or window operations.
\item All conditional rules, including full classification logic and thresholds.
\item Aggregation logic, including grouping and multi-stage aggregation.
\item Ordering, limits, offsets, and tie-breaking rules.
\item Any excluded or unassigned records.
\end{itemize}

\vspace{0.25em}

\textbf{Requirements}

The generated question must:
\begin{itemize}
\item Fully specify the population and all filtering conditions.
\item Describe all intermediate computations and transformations.
\item Define all metrics, aggregations, rankings, and classification rules precisely.
\item Preserve all nuances of the analytical logic, including low-level details such as normalization and date extraction.
\item Clearly state the final output and what each value represents.
\item Be written as a single coherent natural-language paragraph.
\end{itemize}

\vspace{0.25em}

\textbf{Constraints}

\begin{itemize}
\item Do not omit any logic from the analytical specification.
\item Do not introduce assumptions not present in the logic.
\item Avoid vague phrases (e.g., ``analyze trends'').
\item Do not mention implementation terms (e.g., SQL, query, table, column, join).
\item Do not use bullet points or numbered lists in the output.
\item Do not include explanations or preamble text.
\end{itemize}

\vspace{0.25em}

\textbf{Output}

\begin{itemize}
\item Return only a single detailed natural-language question.
\end{itemize}

\end{promptbox}


\subsection{Prompt for Ambiguous NL Question Generation in Synthetic Query Log Generation}
\label{sec:ambiguous-nl-question-generation}

We generate realistic ambiguous user questions by relaxing fully specified analytical questions while preserving their SQL-equivalent intent. The following prompt instructs the model to produce natural text-to-SQL questions that omit details commonly left implicit by users, such as metric definitions, aggregation choices, calculation explanations, and output formatting, while retaining all hard constraints required to recover the same result.

\begin{promptbox}{Prompt for Ambiguous Text-to-SQL Question Generation}
You create realistic user questions for text-to-SQL. Given a precise detailed question and its GOLD SQL, write up to \{num\_questions\} ambiguous user-style questions that ask for the same result while omitting details real users often leave implicit, such as exact metric definitions, calculation explanations, aggregation choices, or output formatting.

\vspace{0.4em}

\textbf{Goal}
\begin{itemize}
\item Simulate natural user requests that preserve the same overall intent and result as the GOLD SQL.
\item Use the schema only to understand available concepts and entities, such as customers, orders, products, payments, categories, or events.
\item Do not mention table names or column names explicitly.
\item When appropriate, include mild schema ambiguity, such as referring to ``orders'' vs. ``items'', ``customers'' vs. ``users'', or ``products'' vs. ``categories''.
\end{itemize}

\vspace{0.4em}

\textbf{Ambiguity Patterns}
\begin{itemize}
\item Prefer missing intent keywords, such as ``sales'' without specifying revenue vs. count, ``average value'' without specifying the aggregation level, or ``top customers'' without fully explaining the ranking metric.
\item Include insufficient reasoning context, such as ``active customers'', ``customer spend'', or ``performance'' without restating all business-rule details.
\item Use mild entity-level ambiguity when natural, such as orders vs. order items, products vs. categories, or deliveries vs. shipments.
\item Leave soft output details implicit when possible, such as ordering, rounding precision, aliases, tie-breaking rules, or output formatting.
\end{itemize}

\vspace{0.4em}

\textbf{Constraint Preservation}
\begin{itemize}
\item First identify hard constraints in the GOLD SQL: filters, date ranges, ranking limits, grouping targets, aggregation scope, distinctness, conditional fallback rules, and required entity relationships.
\item Preserve every hard constraint in every generated question, even if phrased naturally or implicitly.
\item Do not remove or alter constraints that determine which records are included or how a core metric is computed.
\item Soft constraints may be omitted when they do not change the core result, including rounding precision, column aliases, output order, and explanatory calculation steps.
\end{itemize}

\vspace{0.4em}

\textbf{Precision Rules}
\begin{itemize}
\item Preserve exact time and duration logic.
\item Do not replace duration-based constraints with calendar-period phrasing; for example, ``within 7 days'' is not the same as ``within the same week''.
\item Do not introduce new filters, assumptions, entities, rankings, or business rules not implied by the GOLD SQL.
\item Avoid technical language and do not mention SQL, query, database, table, column, join, CTE, or subquery.
\end{itemize}

\vspace{0.4em}

\textbf{Inputs}
\begin{itemize}
\item Backend/Dialect: \{backend\}
\item Database ID: \{db\_id\}
\item Number of Questions: \{num\_questions\}
\item Schema: \{references\}
\item Detailed Question: \{detailed\_question\}
\item GOLD SQL: \{gold\_sql\}
\end{itemize}

\vspace{0.4em}

\textbf{Self-Check}
\begin{itemize}
\item Ensure every hard constraint is logically present in each question.
\item Ensure no question contradicts or changes the GOLD SQL.
\item Ensure no duration, date, ranking, grouping, or filter boundary has drifted.
\item Ensure a competent solver could recover the same intended result from the question.
\end{itemize}

\vspace{0.4em}

\textbf{Output}
\begin{itemize}
\item Output 1 to \{num\_questions\} lines.
\item Each line must be exactly one natural question sentence.
\item Each line must end with ``?''.
\item Output only the questions.
\item No numbering, bullets, explanations, markdown, or extra text.
\end{itemize}

\vspace{0.4em}

\textbf{Ambiguous Questions}

\end{promptbox}


\subsection{Prompt for Ambiguity-Resolution Reranking}
\label{sec:ambiguity-reranker-prompt}

We use an ambiguity-resolution reranker for NL2SQL to select the safest and most useful in-context candidate from the synthetic query log. The reranker first infers the core ambiguity slots in the question, then scores each retrieved candidate by ambiguity relevance, assumption consistency, and contradiction risk, and finally returns a compact set of candidates for downstream prompting.

\begin{promptbox}{Prompt for Ambiguity-Resolution Reranker}

You are an ambiguity-resolution reranker for NL2SQL. Given a current question and a set of retrieved candidate questions with ambiguity explanations, select up to \texttt{reranker\_top\_k} candidates that are safest and most useful as in-context ambiguity-resolution guidance.

\vspace{0.25em}

\textbf{Step 1: Analyze the Question}
\begin{itemize}
\item Infer the core ambiguity slots relevant to the question.
\item Common slot types include grouping grain or entity identifier, metric definition, counting rule, temporal conversion rule, ranking key, filtering scope, join scope or schema grounding, and null or missing-data rule.
\item For each core slot, infer the likely intended interpretation when supported by the question text.
\item If a slot cannot be confidently inferred, mark it as uncertain rather than guessing.
\item Prefer stable and schema-safe interpretations for uncertain slots.
\end{itemize}

\vspace{0.25em}

\textbf{Step 2: Evaluate Each Candidate}
\begin{itemize}
\item Judge ambiguity relevance, resolution consistency, and contradiction risk.
\item Do not reward topical similarity alone.
\item Judge assumptions, not just topic overlap.
\item Reject candidates that conflict with core assumptions or introduce harmful changes to grouping, filtering, aggregation, ranking, or final results.
\item Treat grouping-grain mismatches, counting-rule mismatches, ranking-key mismatches, metric-construction mismatches, filtering-scope mismatches, temporal-conversion mismatches, and aggregation-grain mismatches as contradictions when they can change results.
\item Prefer concrete SQL-relevant clarifications.
\item When uncertain between keep and reject, prefer reject.
\end{itemize}

\vspace{0.25em}

\textbf{Step 3: Select the Final Set}
\begin{itemize}
\item Select up to \texttt{reranker\_top\_k} candidates as a set.
\item Maximize safety, consistency, coverage of distinct core ambiguity slots, and low redundancy.
\item Prefer candidates that cover different ambiguity slots.
\item For aggregation-heavy questions, prioritize grouping, counting, and metric-definition slots.
\item For temporal or formula-heavy questions, prioritize computation formulas and filtering rules.
\item Return fewer than \texttt{reranker\_top\_k} candidates if necessary.
\end{itemize}

\vspace{0.25em}

\textbf{Output}
\begin{itemize}
\item Return valid JSON only.
\item Include question analysis, per-candidate evaluations, the selected set, and a concise selection summary.
\item Mark contradictions explicitly and provide concrete harmful assumptions when possible.
\end{itemize}

\end{promptbox}
\subsection{Prompt for LLM-as-a-Judge SQL Selection with Clarification}
\label{app:sql-selection-judge-prompt}
\begin{promptbox}{LLM-as-a-Judge Prompt for SQL Candidate Selection}

You are a SQL correction judge for text-to-SQL systems. Your job is to select the single best SQL query from exactly 10 candidates.

\vspace{0.4em}

\textbf{Task}
\begin{itemize}
    \item You will be given:
    \begin{itemize}
        \item a natural-language question,
        \item optional clarifying ambiguity questions and answers,
        \item a schema,
        \item exactly 10 candidate SQL queries.
    \end{itemize}
    \item Choose the one candidate that best satisfies the user's true intent.
    \item If clarifying question-answer pairs are provided and non-empty, treat them as the highest-priority source of intent.
    \item If no clarifying Q\&A is provided, infer the most likely intent from the question and schema alone.
\end{itemize}

\vspace{0.4em}

\textbf{Interpretation Rules}
\begin{itemize}
    \item Translate clarifying Q\&A into explicit semantic constraints, such as:
    \begin{itemize}
        \item metric definition,
        \item grouping grain,
        \item filters,
        \item join path,
        \item time window,
        \item ordering,
        \item limits,
        \item output columns.
    \end{itemize}
    \item Prefer the candidate that best matches the clarified requirements with the fewest contradictions.
    \item If multiple candidates are similar, choose the one that is most faithful to the clarified intent and schema.
    \item Use only the provided question, clarifications, schema, and candidate SQLs. Do not rely on outside knowledge.
\end{itemize}

\vspace{0.4em}

\textbf{Strict Constraints}
\begin{itemize}
    \item Select exactly one of the 10 provided candidates.
    \item Do not rewrite, edit, combine, or synthesize SQL.
    \item The output SQL must be copied verbatim from one candidate, word-for-word.
    \item Before finalizing, verify that the SQL appears exactly in the candidate list.
    \item If a candidate uses invalid tables or columns according to the schema, do not choose it unless every other candidate is worse and the issue is minor; otherwise reject it in favor of the best valid option.
\end{itemize}

\vspace{0.4em}

\textbf{Selection Procedure}
\begin{enumerate}
    \item Parse the question and any clarifying Q\&A into a precise intent.
    \item Compare all 10 candidates against that intent.
    \item Check schema validity and semantic faithfulness.
    \item Select the single best candidate.
    \item Output the candidate number and the exact SQL text, unchanged.
\end{enumerate}

\vspace{0.4em}

\textbf{Output Format}
\begin{verbatim}
Candidate: <N>
<exact SQL copied verbatim from candidate N>
\end{verbatim}

\vspace{0.4em}

\textbf{Input Fields}
\begin{itemize}
    \item Question: \{question\}
    \item Clarifying Ambiguity Q\&A: \{ambi\_blob\}
    \item Candidate SQLs (10): \{generated\_sql\_list\}
    \item Schema: \{references\}
\end{itemize}

\vspace{0.4em}

\textbf{Hard Requirement}
\begin{itemize}
    \item Output only the candidate number and the selected SQL.
    \item No explanation.
    \item No markdown.
    \item No extra text.
\end{itemize}

\end{promptbox}

\section{Examples of Derived Clarification Questions}
\label{app:derived_mcq_examples}

We show some examples of generated MCQ questions and answers from synthetic query log of AMBROSIA, Beaver, and Spider 2.0 Lite.

\begin{promptbox}{Examples of Original Questions and Derived MCQs}

\textbf{AMBROSIA.}

\textbf{Original Question:} Show all the action movies and romantic comedies lasting 2 hours.

\textbf{Derived MCQ 1:} Which interpretation best matches what you meant?
\begin{itemize}
    \item[A)] Show all 2-hour films that are classed as either action or romantic comedy.
    \item[B)] Show all action movies, and only romantic comedy films lasting 2 hours.
    \item[C)] None of the above / not what I meant.
\end{itemize}
\textbf{Selected Answer:} A.
\textbf{Explanation:} Return films that are both 2 hours long and classified as either action or romantic comedy.

\vspace{0.5em}
\textbf{BEAVER.}

\textbf{Original Question:} List the name and floor of the building with the largest floor number.

\textbf{Derived MCQ 1:} How should ``largest floor number'' be determined?
\begin{itemize}
    \item[A)] Use the floor-ordering field \texttt{FLOOR\_SORT\_SEQUENCE} to identify the top floor, and return \texttt{BUILDING\_NAME} and \texttt{FLOOR}.
    \item[B)] Use the displayed floor value \texttt{FLOOR} itself to identify the top floor, and return \texttt{BUILDING\_NAME} and \texttt{FLOOR}.
    \item[C)] None of the above / not what I meant.
\end{itemize}
\textbf{Selected Answer:} B.
\textbf{Explanation:} Compare the numeric floor labels shown in \texttt{FLOOR}, ignoring non-numeric floor values instead of using a separate sort field.

\vspace{0.5em}
\textbf{Spider 2.0 Lite.}

\textbf{Original Question:} For each visitor who made at least one transaction in February 2017, how many days elapsed between the date of their first visit in February and the date of their first transaction in February, and on what type of device did they make that first transaction?

\textbf{Derived MCQ 1:} When a visitor has multiple purchase sessions on their earliest purchase date, which session should determine the reported device type?
\begin{itemize}
    \item[A)] Use the device from the first purchase session on that date, i.e., the earliest \texttt{visitStartTime}.
    \item[B)] Use the device from any purchase session on that date, without imposing a specific time ordering.
    \item[C)] None of the above / not what I meant.
\end{itemize}
\textbf{Selected Answer:} A.
\textbf{Explanation:} Take the device type from the earliest purchase session on the first purchase day.

\end{promptbox}

\section{Prompt for MCQ from Synthetic Log Helpfulness Evaluation}
\label{app:nl2sql-clarification-helpfulness-prompt}
\begin{promptbox}{Prompt for NL2SQL Clarification Helpfulness Evaluation}
You are evaluating the utility of injected  Clarification MCQs for NL2SQL.
Assume the  Clarification MCQ has already been injected into the prompt.
\vspace{0.4em}

\textbf{Evaluation Goal}

Judge whether the injected MCQ clarification makes the intended SQL semantics
easier and more reliable for a strong NL2SQL model to recover.

\vspace{0.4em}

\textbf{Inputs}
\begin{itemize}
\item Instance ID: \{instance\_id\}
\item Original question: \{original\_question\}
\item Injected  Clarification MCQ: \{mcq\_blob\}
\item Gold SQL context: \{gold\_sql\_section\}
\end{itemize}

\vspace{0.4em}

\textbf{Decision Rule}
\begin{itemize}
\item Use the gold SQL as the primary signal for intended meaning when available.
\item Label as \texttt{helpful} if the clarification plausibly reduces the risk
of a semantically incorrect SQL query.
\item Label as \texttt{not\_helpful} if the clarification is redundant, vague,
unsupported by the gold SQL, or unlikely to change the model's SQL interpretation.
\end{itemize}

\vspace{0.4em}

\textbf{Helpful Cases}
\begin{itemize}
\item The clarification resolves or partially resolves a SQL-critical ambiguity.
\item It reduces the chance of an incorrect filter, join, aggregation, ranking,
temporal scope, or inclusion/exclusion decision.
\item It is useful for recovering the intended SQL semantics, even if incomplete.
\end{itemize}

\vspace{0.4em}

\textbf{Not Helpful Cases}
\begin{itemize}
\item The clarification mostly repeats information already clear from the question.
\item It adds no material ambiguity resolution.
\item It is too vague to guide SQL construction.
\item It is unsupported by the gold SQL.
\item It does not meaningfully change what a strong NL2SQL model would infer.
\end{itemize}

\vspace{0.4em}

\textbf{Output Requirements}
\begin{itemize}
\item Return exactly one JSON object per evaluated instance.
\item Output only the binary \texttt{helpful}/\texttt{not\_helpful} judgment.
\item Be concrete and concise.
\item Do not output multi-class labels.
\end{itemize}

\vspace{0.4em}

\textbf{Output Format}
\begin{verbatim}
{
  "helpfulness_label": "helpful",
  "gold_sql_used": true,
  "key_sql_semantic_issue": "...",
  "rationale": "..."
}
\end{verbatim}

\end{promptbox}

\section{Prompt for Ambiguity-Driven Probing Quality Analysis}
\label{app:probe-quality-analysis-prompt}

\begin{promptbox}{LLM-as-a-Judge Prompt for Ambiguity-Driven Probing Quality Analysis}

You are an evaluator for ambiguity probing reliability in text-to-SQL systems.

\vspace{0.4em}

\textbf{Evaluation Dimensions}
\begin{itemize}
\item \textbf{Probing Groundedness}
\begin{itemize}
    \item Label = 1 iff every probing ambiguity question is justified by the NLQ, implementation diff evidence, and/or ambiguity taxonomy grounding.
\end{itemize}

\item \textbf{Resolution Correctness}
\begin{itemize}
    \item Label = 1 iff every selected ambiguity resolution is logically correct and supported by probe evidence.
\end{itemize}

\item \textbf{SQL Repair Faithfulness}
\begin{itemize}
    \item Label = 1 iff the repaired SQL implements the selected resolution(s) without unrelated semantic drift.
\end{itemize}
\end{itemize}

\vspace{0.4em}

\textbf{Inputs}
\begin{itemize}
\item Instance ID: \{instance\_id\}
\item NLQ: \{question\}
\item Taxonomy Context: \{taxonomy\_context\}
\item Implementation Diff Blob: \{implementation\_diff\_blob\}
\item Ambiguity Question Plan JSON: \{ambiguity\_question\_plan\_json\}
\item Ambiguity Resolutions JSON: \{ambiguity\_resolutions\_json\}
\item Probe Report JSONL: \{ambiguity\_probe\_report\_jsonl\}
\item Before SQL: \{before\_sql\}
\item Fixed SQL: \{fixed\_sql\}
\end{itemize}

\vspace{0.4em}

\textbf{Rules}
\begin{itemize}
\item Base judgments only on the provided inputs.
\item If required evidence for a metric is missing, assign label = 0 for that metric and explain why.
\item Keep evidence concise and specific.
\item Output strict JSON only.
\end{itemize}

\vspace{0.4em}

\textbf{Output Format}
\begin{verbatim}
{
  "instance_id": "{instance_id}",
  "groundedness": {
    "label": 0,
    "grounded_questions": 0,
    "total_questions": 0,
    "precision_grounded": 0.0,
    "evidence": ["..."],
    "rationale": "..."
  },
  "resolution_correctness": {
    "label": 0,
    "correct_resolutions": 0,
    "total_resolutions": 0,
    "precision_resolution": 0.0,
    "evidence": ["..."],
    "rationale": "..."
  },
  "repair_faithfulness": {
    "label": 0,
    "required_changes_count": 0,
    "implemented_changes_count": 0,
    "precision_faithful": 0.0,
    "unrelated_drift_detected": false,
    "evidence": ["..."],
    "rationale": "..."
  },
  "all_three_pass": 0
}
\end{verbatim}

\end{promptbox}
\section{Bucketed Analysis of Hard Spider 2.0-Lite Instances}
\label{app:bucket-analysis}

\vspace{0.5em}

\begin{table}[H]
\centering
\small
\setlength{\tabcolsep}{10pt}
\renewcommand{\arraystretch}{1.15}

\begin{tabular}{lccc}
\toprule
\textbf{Difficulty Bucket} & \textbf{\# Inst.} & \textbf{Baseline} ($\boldsymbol{\mu \pm \sigma}$) & \textbf{+ Probing} \\
\midrule
Never correct    & 268 & $0.0 \pm 0.0$   & \textbf{30.6} \\
Sparsely correct & 54  & $20.4 \pm 6.2$  & \textbf{81.5} \\
\bottomrule
\end{tabular}

\vspace{0.6em}

\caption{
Execution Accuracy on difficult Spider 2.0-Lite instances using Gemma-4-31B.
To estimate baseline variability, we first run \sys ten times per instance before applying probing and group instances by the number of correct executions across runs.
Instances with zero correct executions are labeled \textit{never correct}, while instances with one to four correct executions are labeled \textit{sparsely correct}.
Baseline results report mean execution accuracy and standard deviation across the 10 runs, whereas probing is evaluated once on the resulting buckets.
}
\label{tab:spider2_bucket_probe_gain}
\end{table}

\section{Ambiguity Probing Quality Analysis}
\label{app:probe-audit}

\paragraph{Sampling strategy.}
We construct the audit set from the full 547-instance Spider~2.0 evaluation generated by SOMA-SQL with Gemma-4.
To reduce sampling bias, we use outcome-stratified proportional sampling over four transition categories:

\textit{wrong$\rightarrow$correct}: the original SQL prediction is incorrect before probing, but the repaired SQL becomes correct after ambiguity probing and repair;

\textit{correct$\rightarrow$correct}: the original SQL prediction is already correct before probing and remains correct after ambiguity probing and repair;

\textit{wrong$\rightarrow$wrong}: the original SQL prediction is incorrect before probing and remains incorrect after ambiguity probing and repair;

\textit{correct$\rightarrow$wrong}: the original SQL prediction is correct before probing, but ambiguity probing and repair introduce an incorrect final SQL prediction.

We sample 50 instances proportionally from these categories and additionally require artifact completeness, including the probing plan, ambiguity resolutions, and probe report.

\paragraph{Metric definitions.}
Each sampled instance is evaluated using binary labels on three dimensions:
(1) \textbf{Probing groundedness},
(2) \textbf{Resolution correctness}, and
(3) \textbf{SQL repair faithfulness}.

\textbf{Probing groundedness} measures whether probing questions are justified by the NLQ and supported by either implementation-diff evidence or ambiguity taxonomy grounding (\textit{AmbiSchema}, \textit{AmbiValue}, or \textit{AmbiIntent}).

\textbf{Resolution correctness} measures whether the selected ambiguity resolutions are supported by probe evidence and consistent with the intended user meaning.

\textbf{SQL repair faithfulness} measures whether the repaired SQL correctly implements the selected resolution(s) without introducing unrelated semantic drift.

For each metric, a label of 1 is assigned only if all relevant sub-items within the instance satisfy the criterion; otherwise, the instance receives label 0.

\paragraph{Judge configuration.}
We use a separate judge model (gpt5.4) from the generation model (\texttt{gemma-4}) to reduce self-enhancement bias \cite{ye2024justice} from LLM as a judge.
The judge is required to produce structured JSON outputs containing binary labels and supporting rationale for each metric. The full judge prompt is provided in \ref{app:probe-quality-analysis-prompt}.

\section{End-to-End Example: Ambiguity-Guided SQL Repair}
\label{app:probe-example}
\subsection{Task Overview}
\label{app:probe-task}

\begin{promptbox}{}

\textbf{Question}: \textit{show me churn for enterprise customers last quarter}
\vspace{0.4em}

\textbf{Ambiguity}:
The term ``churn'' may correspond to multiple business definitions:
\begin{itemize}
\item canceled subscriptions
\item inactive accounts
\item lost revenue
\item customers with no purchases
\end{itemize}

\vspace{0.4em}

\textbf{Ambiguity Hypothesis (Implementation Diff)}

\begin{verbatim}
slot: churn_definition

candidates: {
  canceled_subscription,
  inactive_account,
  revenue_loss,
  no_recent_purchase
}

trigger: "churn" mention in NL
\end{verbatim}

\end{promptbox}

\vspace{0.5em}

\noindent
This example demonstrates how \sys resolves semantic ambiguity using database-grounded probing.

\subsection{Repair Prompt (Condensed)}

\label{app:probe-prompt}

\begin{promptbox}{}

\begin{verbatim}
Use $sql-ambiguity-driven-probing to fix demo_churn_001.

Question text available: true
Question: show me churn for enterprise customers last quarter

Candidate SQL count from run_records: 10
Candidate run names: run_1,...,run_10
Candidate SQL context: multiple candidate SQL variants available

Implementation diff available: true
implementation_diff_blob:
  slot = churn_definition
  candidates = {
    canceled_subscription,
    inactive_account,
    revenue_loss,
    no_recent_purchase
  }
  ambiguity trigger = "churn"

Probe policy:
- Create 8–9 clarification probes total
- Exactly 3 probes grounded in implementation diff
- Remaining probes grounded in taxonomy

Constraints:
- Treat implementation diff as ambiguity hypotheses only
- Do not patch directly from diff
- Patch only using probe evidence

Execution policy:
- Execute probes against the database
- Use database results as grounding evidence

Repair objective:
- Apply minimal, probe-supported SQL changes
- Preserve original semantics unless evidence supports change
\end{verbatim}

\end{promptbox}

\subsection{Ambiguity Probing Report}
\label{app:probe-report}
\begin{promptbox}{}

\textbf{Probe 1: Canceled subscription interpretation}
\begin{verbatim}
SELECT COUNT(*) AS churned_customers
FROM subscriptions s
JOIN customers c ON s.customer_id = c.customer_id
WHERE c.segment = 'enterprise'
  AND s.status = 'canceled'
  AND s.cancel_date >= date('now','start of month','-3 month')
  AND s.cancel_date < date('now','start of month');

Result:
[[4821]]
\end{verbatim}

\textbf{Probe 2: Inactive account interpretation}
\begin{verbatim}
SELECT COUNT(*) AS inactive_customers
FROM customers c
WHERE c.segment = 'enterprise'
  AND c.last_activity_date <
      date('now','start of month','-3 month');

Result:
[[137]]

\end{verbatim}

\textbf{Probe 3: No recent purchase interpretation}
\begin{verbatim}
SELECT COUNT(DISTINCT c.customer_id)
FROM customers c
LEFT JOIN orders o
  ON c.customer_id = o.customer_id
WHERE c.segment = 'enterprise'
GROUP BY c.customer_id
HAVING MAX(o.order_date) <
       date('now','start of month','-3 month');

Result:
[[94]]
\end{verbatim}
\end{promptbox}

\subsection{Resolution}
\label{app:probe-resolution}

\begin{promptbox}{Resolved Ambiguity Interpretation}

\textbf{Resolved Interpretation}

``Churn'' refers to \textbf{canceled subscriptions}.

\vspace{0.4em}

\textbf{Evidence}

Canceled subscriptions produce substantially stronger and temporally consistent signals compared to alternative interpretations.

\vspace{0.4em}

\textbf{Why Alternatives Fail}

Inactive-account and no-purchase interpretations yield sparse evidence and do not align with observed subscription behavior.

\vspace{0.4em}

\textbf{Decision Rule}

Select the interpretation with the strongest database-grounded evidence.

\vspace{0.4em}

\textbf{Repair Action}

Apply filtering on subscription cancellation status and cancellation timestamps.

\end{promptbox}

\subsection{Final Repaired SQL}
\label{app:probe-sql}

\begin{promptbox}{}

\begin{verbatim}
SELECT
  COUNT(DISTINCT s.customer_id) AS churned_enterprise_customers
FROM subscriptions s
JOIN customers c
  ON s.customer_id = c.customer_id
WHERE c.segment = 'enterprise'
  AND s.status = 'canceled'
  AND s.cancel_date >= date('now','start of month','-3 month')
  AND s.cancel_date < date('now','start of month');
\end{verbatim}

\end{promptbox}

\end{document}